\crefname{assumption}{assumption}{assumptions}
\crefname{algocf}{alg.}{algs.}
\Crefname{algocf}{Algorithm}{Algorithms}
\newcommand{\rebuttal}[1]{#1}
\newcommand{\rbl}[1]{#1}
\newcommand{\newreptheorem}[2]{\newtheorem*{rep@#1}{\rep@title}
	\newenvironment{rep#1}[1]{\def\rep@title{#2 \ref*{##1}}\begin{rep@#1}}{\end{rep@#1}}
}
\theoremstyle{plain}
\newtheorem{theorem}{Theorem}		%
\newtheorem{corollary}{Corollary}		%
\newtheorem{lemma}{Lemma}		%
\newtheorem*{corollary*}{Corollary}		%
\theoremstyle{definition}
\newtheorem{assumption}{Assumption}		%
\newtheorem*{definition*}{Definition}		%
\newtheorem*{assumption*}{Assumptions}		%
\newtheorem*{example*}{Example}		%
\theoremstyle{remark}
\newtheorem{remark}{Remark}		%
\newtheorem*{remark*}{Remark}		%
\newcounter{proofpart}
\title{Revisiting adversarial training for the worst-performing class}
\author{\name Thomas Pethick \email thomas.pethick@epfl.ch \\
      \addr École Polytechnique Fédérale de Lausanne (EPFL)
      \AND
      \name Grigorios G Chrysos \email grigorios.chrysos@epfl.ch \\
      \addr École Polytechnique Fédérale de Lausanne (EPFL)
      \AND
      \name Volkan Cevher \email volkan.cevher@epfl.ch\\
      \addr École Polytechnique Fédérale de Lausanne (EPFL)}
\begin{document}

\maketitle

\begin{abstract}
Despite progress in adversarial training (AT), there is a substantial gap between the top-performing and worst-performing classes in many datasets. For example, on CIFAR10, the accuracies for the best and worst classes are 74\% and 23\%, respectively. We argue that this gap can be reduced by explicitly optimizing for the worst-performing class, resulting in a min-max-max optimization formulation.  Our method, called class focused online learning (CFOL), includes high probability convergence guarantees for the worst class loss and can be easily integrated into existing training setups with minimal computational overhead.  We demonstrate an improvement to 32\% in the worst class accuracy on CIFAR10, and we observe consistent behavior across CIFAR100 and STL10.  Our study highlights the importance of moving beyond average accuracy,  which is particularly important in safety-critical applications.

\end{abstract}

\section{Introduction}
\label{sec:introduction}

The susceptibility of neural networks to adversarial attacks %
\citep{goodfellow2014explaining,szegedy2013intriguing} has been a grave concern over the launch of such systems in real-world applications. 
Defense mechanisms that optimize the average performance have been proposed \citep{papernot2016distillation,raghunathan2018certified,guo2017countering,madry2017towards,zhang2019theoretically}. In response, even stronger attacks have been devised \citep{carlini2016defensive,engstrom2018evaluating,carlini2019ami}.

In this work, we argue that the average performance is not the only criterion that is of interest for real-world applications.
For classification, in particular, optimizing the average performance provides very poor guarantees for the ``weakest'' class.
This is critical in scenarios where {we require \emph{any} class to perform well}. 
It turns out that the worst performing class can indeed be much worse than the average in adversarial training. 
This difference is already present in clean training but we critically observe, that the gap between the average and the worst is greatly exacerbated in adversarial training.
This gap can already be observed on CIFAR10 where the accuracy across classes is far from uniform with $51\%$ 
average robust accuracy while the worst class is $23\%$ (see \Cref{fig:precision-barplot}).
The effect is even more prevalent when more classes are present as in CIFAR100 where we observe that the worst class has \emph{zero} accuracy while the average accuracy 
is $28\%$ (see \Cref{app:experiments} where we include other datasets).
\rebuttal{Despite the focus on adverarial training, we note that the same effect can be observed for robust evaluation after \emph{clean} training (see \Cref{fig:clean-on-robust} §\ref{app:experiments}).}  %

This dramatic drop in accuracy for the weakest classes begs for different approaches than the classical empirical risk minimization (ERM), which focuses squarely on the average loss.  
We suggest a simple alternative, which we call \emph{class focused online learning} (CFOL), that can be plugged into existing adversarial training procedures.
Instead of minimizing the average performance over the dataset we sample from an adversarial distribution over classes that is %
learned jointly with the model parameters. %
\rbl{In this way we aim at ensuring some level of robustness even for the worst performing class.}
\rebuttal{The focus of this paper is thus on the robust accuracy of the weakest classes instead of the average robust accuracy.} 
\looseness=-1

Concretely, we make the following contributions:

\begin{itemize}
  \item We propose a simple solution which relies on the classical bandit algorithm from the online learning literature, namely the Exponential-weight algorithm for Exploration and Exploitation (Exp3) \citep{auerNonstochasticMultiarmedBandit2002a}. 
  The method is directly compatible with standard adversarial training procedures \citep{madry2017towards}, by replacing the empirical distribution with an adaptively learned adversarial distribution over classes.
  \item 
  We carry out extensive experiments comparing CFOL against three strong baselines across three datasets, where we consistently observe that CFOL improves the weakest classes.
  \item We support the empirical results with high probability convergence guarantees for the worst class accuracy and establish direct connection with the conditional value at risk (CVaR) \citep{rockafellar2000optimization} uncertainty set from distributional robust optimization. %
\end{itemize}

\begin{figure*}[t]
\centering
\includegraphics[width=0.48\linewidth]{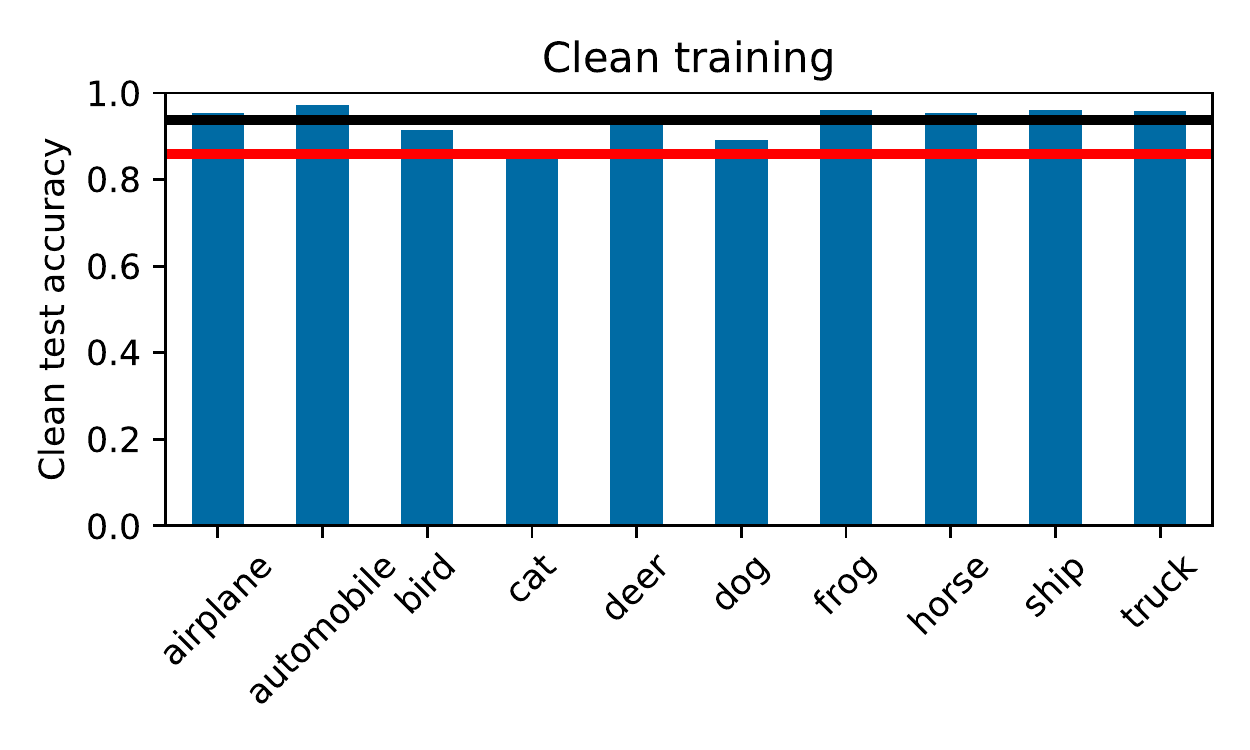}
\includegraphics[width=0.48\linewidth]{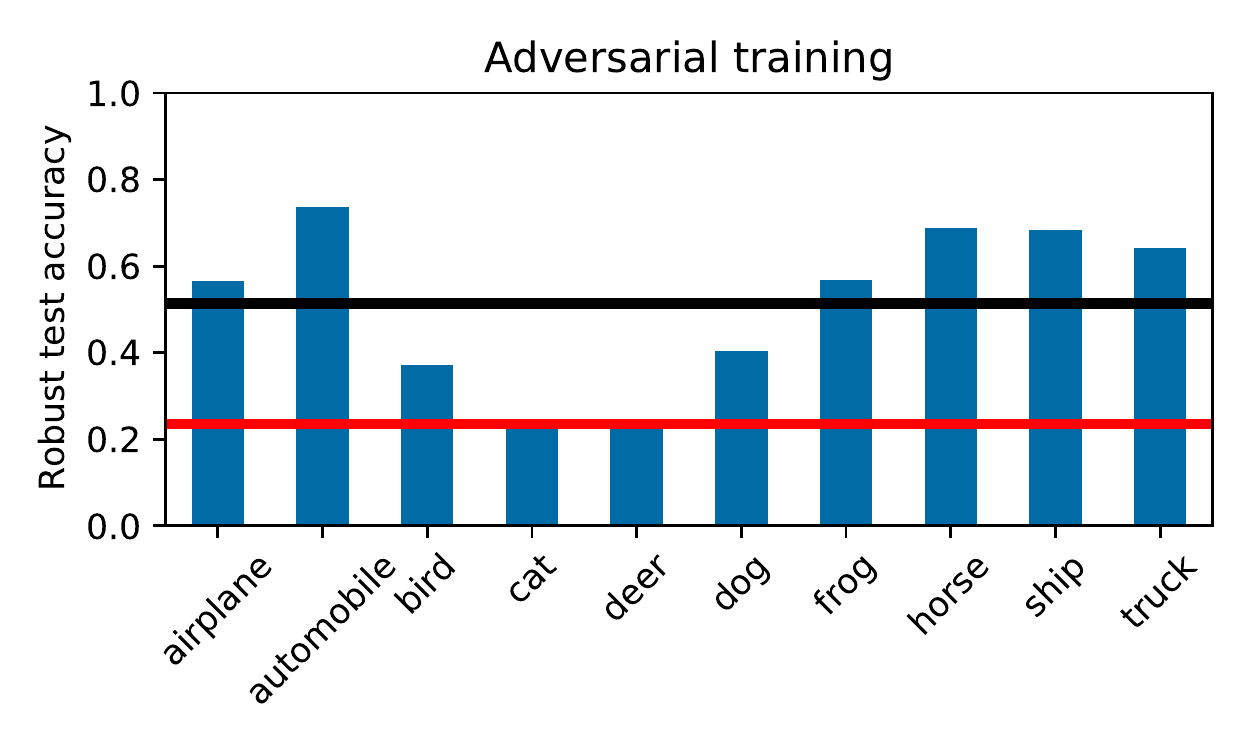}
\caption{
  The error across classes is already not perfectly uniform in clean training on CIFAR10.
  However, this phenomenon is significantly worsened in adversarial training when considering the robust accuracy.
  That is, some classes perform much worse than the average. 
  The worst class accuracy and average accuracy is depicted with a red and black line respectively.
}
\label{fig:precision-barplot}
\end{figure*}

\section{Related work} %

\paragraph{Adversarial examples}
\citet{goodfellow2014explaining,szegedy2013intriguing} are the first to make the important observation that deep neural networks are vulnerable to small adversarially perturbation of the input.
Since then, there has been a growing body of literature addressing this safety critical issue, spanning from
certified robust model \citep{raghunathan2018certified}, 
distillation \citep{papernot2016distillation},
input augmentation \citep{guo2017countering},
to adversarial training \citep{madry2017towards,zhang2019theoretically}.
We focus on adversarial training in this paper.
While certified robustness is desirable, adversarial training remains one of the most successful defenses in practice.

Parallel work \citep{tian2021analysis} 
also observe the non-uniform accuracy over classes in adversarial training, further strengthening the case that lack of class-wise robustness is indeed an issue. 
They primarily focus on constructing an attack that can enlarge this disparity. 
However, they also demonstrate the possibility of a defense by showing that the accuracy of class 3 can be improved by manually reweighting class 5 when retraining.
Our method CFOL can be seen as automating this process of finding a defense by instead adaptively assigning more weight to difficult classes.

\paragraph{Minimizing the maximum}
Focused online learning (FOL) \citep{shalev2016minimizing}, takes a bandit approach similar to our work, but instead re-weights the distribution over the $N$ training examples independent of the class label.
This leads to a convergence rate in terms of the number of examples $N$ instead of the number of classes $k$ for which usually $k\ll N$.
We compare in more detail theoretically and empirically in \Cref{sec:theory} and \Cref{sec:experiments} respectively.
{\citet{sagawa2019distributionally} instead reweight the gradient over known groups which coincides with the variant considered in \Cref{sec:reweight}.
A reweighting scheme has also been considered for data augmentation \citep{yi2021reweighting}. They obtain a closed form solution under a heuristically driven entropy regularization and full information.
In our setting full information would imply full batch updates and would be infeasible.}

Interpolations between average and maximum loss have been considered in various other settings: for class imbalanced datasets \citep{lin2017focal},
in federated learning \citep{li2019fair},
and more generally the tilted empirical risk minimization \citep{li2020tilted,lee2020learning}.

\paragraph{Distributional robust optimization}
The accuracy over the worst class can be seen as a particular re-weighing of the data distribution which adversarially assigns all weights to a single class.
Worst case perturbation of the data distribution have more generally been studied %
under the framework of distributional robust stochastic optimization (DRO) \citep{ben2013robust,shapiro2017distributionally}.
Instead of attempting to minimizing the empirical risk on a training distribution $P_0$, this framework considers some \emph{uncertainty set} around the training distribution $\mathcal U(P_0)$ and seeks to minimize the worst case risk within this set, $\sup _{Q \in \mathcal{U}\left(P_{0}\right)} \mathbb{E}_{x\sim Q}[\ell(x)]$.

\rebuttal{
A choice of uncertainty set, which has been given significant attention in the community, is conditional value at risk (CVaR), which aims at minimizing the weighted average of the tail risk \citep{rockafellar2000optimization,levy2020large,kawaguchi2020ordered,fan2017learning,curi2019adaptive}.
CVaR has been specialized to a re-weighting over class labels, namely labeled conditional value at risk (LCVaR) \citep{xu2020class}.
This was originally derived in the context of imbalanced dataset to re-balance the classes.
It is still applicable in our setting and we thus provide a comparison.
The original empirical work of \citep{xu2020class} only considers the full-batch setting.
We complement this by demonstrating LCVaR in a stochastic setting. %
}

In \citet{duchi2019distributionally, duchi2018learning} they are  interested in uniform performance over various groups, which is similarly to our setting.
However, these groups are assumed to be \emph{latent} subpopulations, which introduce significant complications.
The paper is thus concerned with a different setting, an example being training on a dataset implicitly consisting of multiple text corpora.

CFOL can also be formulated in the framework of DRO by choosing an uncertainty set that can re-weight the $k$ class-conditional risks.
The precise definition is given in \Cref{app:rel-cvar}.
We further establish a direct connection between the uncertainty sets of CFOL and CVaR that we make precise in \Cref{app:rel-cvar}, which also contains a summary of the most relevant related methods in \Cref{table:methods} §\ref{app:cvar}.

\section{Problem formulation and preliminaries}
\label{sec:setup}
\paragraph{Notation}
The %
data distribution is denoted by $\mathcal D$ with examples $x \in \mathbb R^d$ and classes $y \in [k]$.
A given iteration is characterized by $t \in [T]$,
while $p_t^y$ indicates the $y^{\text{th}}$ index of the $t^{\text{th}}$ iterate.
The indicator function is denoted with $\mathbbm{1}_{\{\mathrm{boolean}\}}$  and
$\operatorname{unif}(n)$ indicates the uniform distribution over $n$ elements.
An overview of the notation is provided in \Cref{app:convergence}.

In classification, we are normally interested in minimizing the population risk $\mathbb E_{(x,y)\sim\mathcal{D}}[\ell(\theta,x,y)]$ over our model parameters $\theta \in \mathbb R^p$, where $\ell$ is some loss function of $\theta$ and example $x \in \mathbb R^d$ with an associated class $y \in [k]$.
\citet{madry2017towards} formalized the objective of adversarial training by replacing each example with an adversarially perturbed variant. %
That is, we want to find a parameterization $\theta$ of our predictive model which solves the following optimization problem: 
\begin{equation}
\min_\theta L(\theta) := \mathbb{E}_{(x, y) \sim \mathcal{D}}\left[\max _{\delta \in \mathcal{S}} \ell(\theta, x+\delta, y)\right],
\end{equation}
where each $x$ is now perturbed by adversarial noise $\delta \in \mathcal{S} \subseteq \mathbb R^d$.
Common choices of $\mathcal{S}$ include norm-ball constraints \citep{madry2017towards} or bounding some notion of perceptual distance \citep{laidlaw2020perceptual}.
When the distribution over classes is uniform this is implicitly minimizing the \emph{average} loss over all class. %
This does not guarantee high accuracy for the \emph{worst} class as illustrated in \Cref{fig:precision-barplot}, since we only know with certainty that $\max\geq\operatorname{avg}$. 

Instead, we will focus on a different objective, namely minimizing the \emph{worst class-conditioned risk}:
\begin{equation}
\label{eq:minmax-risk}
\min_\theta \max_{y\in [k]} \left\{ L_y(\theta) := \mathbb{E}_{x \sim p_\mathcal{D}(\cdot|y)}\left[\max _{\delta \in \mathcal{S}} \ell(\theta, x+\delta, y)\right] \right\}.
\end{equation}
This follows the philosophy that ``\emph{a chain is only as strong as its weakest link}". %
In a safety critical application, such as autonomous driving, modeling even just a single class wrong can still have catastrophic consequences.
Imagine for instance a street sign recognition system.
Even if the system has 99\% \emph{average} accuracy the system might never label a "stop"-sign correctly, thus preventing the car from stopping at a crucial point.  %

As the maximum in \Cref{eq:minmax-risk} is a discrete maximization problem its treatment requires more care. %
We will take a common approach and construct a convex relaxation by lifting the problem in \Cref{sec:method}.

\section{Method}
\label{sec:method}

\begin{figure}[tb]
\centering
\includegraphics[width=0.5\linewidth]{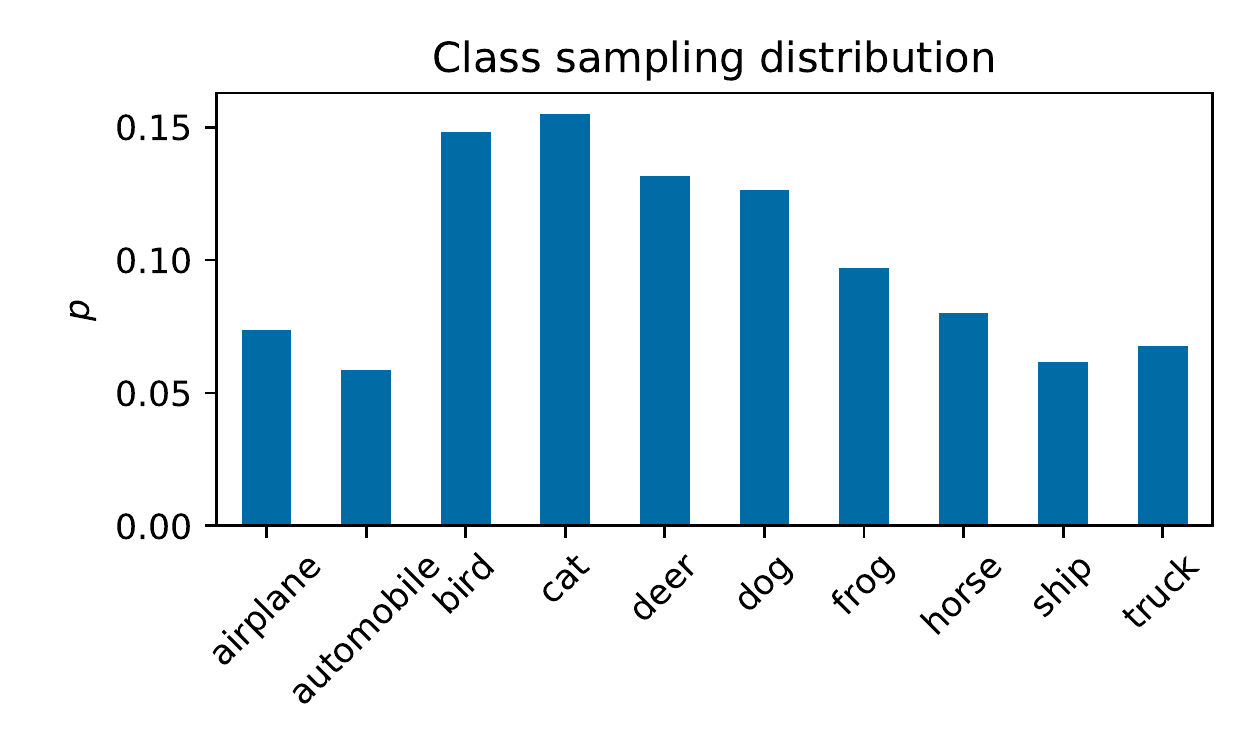}
\caption{
Contrary to ERM, which samples the examples uniformly, CFOL samples from an adaptive distribution.
The learned adversarial distribution is non-uniform over the classes in CIFAR10 \rebuttal{when using adversarial training}. 
As expected, the hardest classes are also most frequently sampled. 
}
\label{fig:CFOL-dist}
\end{figure}

Since we do not have access to the true distribution $\mathcal D$, we will instead minimize over the provided empirical distribution.
Let $\mathcal N_y$ be the set of data point indices for class $y$ such that the total number of examples is $N=\sum_{y=1}^k |\mathcal N_y|$. 
Then, we are interested in minimizing the \emph{maximum empirical class-conditioned risk},
\begin{equation}
\label{eq:emp-loss}
\max _{y \in[k]} \widehat{L}_y(\theta) := \frac{1}{|\mathcal N_y|} \sum_{i \in \mathcal N_y} \max _{\delta \in \mathcal{S}} \ell(\theta, x_i+\delta, y).
\end{equation}

We relax this discrete problem to a continuous problem over the simplex $\Delta_k$, 
\begin{equation}
\label{eq:cont-relax}
\max_{y\in[k]} \widehat{L}_y(\theta) \leq \max_{p \in \Delta_k} \sum_{y=1}^k p_y \widehat{L}_y(\theta).
\end{equation}
Note that equality is attained when $p$ is a dirac on the argmax over classes.

\Cref{eq:cont-relax} leaves us with a two-player zero-sum game between the model parameters $\theta$ and the class distribution $p$.
A principled way of solving a min-max formulation is through the use of no-regret algorithms.
From the perspective of $p$, the objective is simply linear under simplex constraints, albeit adversarially picked by the model. 
This immediately makes the no-regret algorithm Hedge applicable \citep{freund1997decision}:
\looseness=-1
\begin{equation}
\label{eq:hedge}
\tag{Hedge}
\begin{split}
w^{t}_y&=w^{t-1}_y-\eta \widehat{L}_{y}(\theta^t),\\
p^{t}_y&=\exp \left(w^{t}_y\right) / \sum_{y=1}^{k} \exp \left(w^{t}_y\right).
\end{split}
\end{equation}
To show convergence for (\ref{eq:hedge}) the loss needs to satisfy certain assumptions.
In our case of classification, the loss is the zero-one loss $\ell(\theta,x,y)=\mathbbm{1}[h_\theta(x)\neq y]$, where $h_\theta(\cdot)$ is the predictive model.
Hence, the loss is bounded, which is a sufficient requirement.

Note that (\ref{eq:hedge}) %
relies on zero-order information of the loss, which we indeed have available.
However, in the current form, (\ref{eq:hedge}) requires so called \emph{full information} over the $k$ dimensional loss vector. 
In other words, we need to compute $\widehat{L}_y(\theta)$ for all $y\in [k]$, which would require a full pass over the dataset for every iteration.

Following the seminal work of \citet{auerNonstochasticMultiarmedBandit2002a}, we instead construct an unbiased estimator of the $k$ dimensional loss vector $\widehat{L}(\theta^t):=(\widehat{L}_1(\theta^t), ..., \widehat{L}_k(\theta^t))^\top$ based on a sampled class $y^t$ from some distribution $y^t \sim p^t$.
This stochastic formulation further lets us estimate the class conditioned risk $\widehat{L}_{y^t}(\theta^t)$ with an unbiased sample $i \sim \operatorname{unif}(|\mathcal{N}_{y^t}|)$.
This leaves us with the following estimator,
\begin{equation}
\widetilde{L}^{t}_y=\left\{\begin{array}{ll}L_{y,i}(\theta^t) / p^{t}_y & y=y^{t} \\ 0 & \text { otherwise }\end{array}\right.
\end{equation}
where $L_{y, i}\left(\theta\right):=\max _{\delta \in \mathcal{S}} \ell\left(\theta, x_{i}+\delta, y\right)$.
It is easy to verify that this estimator is unbiased,
\begin{equation}
\mathbb{E}_{y \sim p}\left[\widetilde{L}^{t}_y\right]=\left(1-p^{t}_y\right) \cdot 0+p^{t}_y \cdot \frac{\widehat{L}_y(\theta^t)}{p^{t}_y}=\widehat{L}_y(\theta^t).
\end{equation}
For ease of presentation the estimator only uses a single sample but this can trivially be extended to a mini-batch where classes are drawn i.i.d. from $p^t$.

We could pick $p^t$ to be the learned adversarial distribution, but it is well known that this can lead to unbounded regret 
if some $p^t_y$ is small (see \Cref{app:convergence} for definition of regret).
\citet{auerNonstochasticMultiarmedBandit2002a} resolve this problem with Exp3, which instead learns a distribution $q^t$, then mixes $q^t$ with a uniform distribution, to eventually sample from $p^t_y = \gamma \frac{1}{k} + (1-\gamma) q^t_y$ where $\gamma\in (0,1)$.
Intuitively, this enforces exploration. 
In the general case $\gamma$ needs to be picked carefully and small enough, but we show in \Cref{thm:convergence} that a larger $\gamma$ is possible in our setting.
We explore the effect of different choices of $\gamma$ empirically in \Cref{tab:interpolation}.

Our algorithm thus updates $q^{t+1}$ with (\ref{eq:hedge}) using the estimator $\widetilde{L}^t$ with a sample drawn from $p^t$ and subsequently computes $p^{t+1}$.
CFOL in conjunction with the simultaneous update of the minimization player can be found in \cref{algo:cfol} with an example of a learned distribution $p^t$ in \Cref{fig:CFOL-dist}.
\looseness=-1

Practically, the scheme bears negligible computational overhead over ERM %
since the softmax required to sample is of the same dimensionality as the softmax used in the forward pass through the model.
This computation is negligible in comparison with backpropagating through the entire model.
For further details on the implementation we refer to \Cref{app:impl}. %

\begin{algorithm*}[t]
Algorithm parameters: 
a step rule $\operatorname{ModelUpdate}$ for the model satisfying \Cref{asm:mistake-bound}, adversarial step-size $\eta>0$, uniform mixing parameter $\gamma = \nicefrac{1}{2}$, and
the loss $L_{y,i}(\theta)$.

Initialization: Set $w^0=0$ such that $q^0$ and $p^0$ are uniform.

\ForEach{$t$ in $0..T$}{
$y^{t} \sim p^{t}$ \tcp*{sample class}
$i^{t} \sim \operatorname{unif}(|\mathcal N_{y^{t}}|)$ \tcp*{sample uniformly from class}
$\theta^{t+1} = \operatorname{ModelUpdate}(\theta^{t}, L_{y^{t},i^{t}}(\theta^{t}))$ \tcp*{update model parameters}
$\widetilde{L}^{t}_{y}=\mathbbm{1}_{\{y=y^t\}} L_{y,i^{t}}(\theta^{t}) / p^{t}_{y} \ \forall y$ \tcp*{construct estimator}
$w^{t+1}_{y}=w^{t}_{y}-\eta \widetilde{L}^{t}_{y}  \ \forall y$ \tcp*{update the adv.\ class distribution}
$q^{t+1}_y=\exp \left(w^{t+1}_y\right) / \sum_{y=1}^{k} \exp \left(w^{t+1}_y\right)\ \forall y$\;
$p^{t+1}_y = \gamma\frac{1}{k} + (1-\gamma) q^{t+1}_y \ \forall y$\;
}
\caption{Class focused online learning (CFOL)}
\label{algo:cfol}
\end{algorithm*}

\subsection{Reweighted variant of CFOL}\label{sec:reweight}

\Cref{algo:cfol} samples from the adversarial distribution $p$. Alternatively one can sample data points uniformly and instead reweight the gradients for the model using $p$.
In expectation, an update of these two schemes are equivalent.
To see why, observe that in CFOL the model has access to the gradient $\nabla_\theta L_{y,i}(\theta)$.
We can obtain an unbiased estimator by instead reweighting a uniformly sampled class, i.e. $\mathbb{E}_{y \sim p, i}\left[\nabla_\theta L_{y,i}(\theta)\right] = \mathbb{E}_{y \sim \operatorname{unif}(k), i}\left[k p_y \nabla_\theta  L_{y,i}(\theta)\right]$. With classes sampled uniformly the unbiased estimator for the adversary becomes $\widetilde{L}_{y'} = \mathbbm{1}_{\{y'=y\}} L_{y} k \ \forall y'$. Thus, one update of CFOL and the reweighted variant are equivalent in expectation. However, note that we additionally depended on the internals of the model's update rule and that the immediate equivalence we get is only in expectation.
{These modifications recovers the update used in \citet{sagawa2019distributionally}.}
See \Cref{tab:reweight} in the supplementary material for experimental results regarding the reweighted variant.

\subsection{Convergence rate}
\label{sec:theory}

To understand what kind of result we can expect, it is worth entertaining a hypothetical worst case scenario.
Imagine a classification problem where one class is much harder to model than the remaining classes.
We would expect the learning algorithm to require exposure to examples from the hard class in order to model that class appropriately---otherwise the classes would not be distinct.
From this one can see why ERM might be slow.
The algorithm would naively pass over the entire dataset in order to improve the hard class using only the fraction of examples belonging to that class. %
In contrast, if we can adaptively focus on the difficult class, we can avoid spending time on classes that are already improved sufficiently.
As long as we can adapt fast enough, as expressed through the regret of the adversary, %
we should be able to improve on the convergence rate for the worst class.

We will now make this intuition precise by establishing a high probability convergence guarantee for the worst class loss analogue to that of FOL.
For this we will assume that the model parameterized by $\theta$ enjoys a so called mistake bound of $C$ \citep[p. 288]{shalev2011online}.
The proof is deferred to \cref{app:convergence}.

\begin{assumption}
\label{asm:mistake-bound}
For any sequence of classes $(y^1,...,y^T) \in [k]^T$ and class conditioned sample indices $(i^1,...,i^T)$ with $i^t \in \mathcal{N}_{y^t}$ the model enjoys the following bound for some \rebuttal{$C'<\infty$ and $C=\max\{k \log k, C'\}$},
\begin{equation}
\sum_{t=1}^T L_{y^t,i^t}(\theta^t) \leq C.
\end{equation}
\end{assumption}
\rebuttal{
\begin{remark}
The requirement $C \geq k \log k$ will be needed to satisfy the mild step-size requirement $\eta \leq 2k$ in \Cref{lm:adv-regret}.
In most settings the smallest $C'$ is some fraction of the number of iterations $T$, which in turn is much larger than the number of classes $k$, so $C=C'$.
\end{remark}}

With this at hand we are ready to state \rebuttal{the convergence of the worst class-conditioned empirical risk.}
\begin{theorem}
\label{thm:convergence}
If \cref{algo:cfol} is run on bounded rewards $L_{y^t,i^t}(\theta^t) \in [0,1]$ $\forall t$ with step-size $\eta=\sqrt{\log k /(4 k C)}$, mixing parameter $\gamma = \nicefrac{1}{2}$ and the model satisfies \Cref{asm:mistake-bound}, then after $T$ iterations with probability at least $1-\delta$,
\begin{equation}
\begin{split}
\max_{y\in [k]}\frac 1n\sum_{j=1}^n\widehat{L}_{y}\left(\theta^{t_j}\right) 
\leq &\frac{6 C}{T}+\frac{\sqrt{4 k \log (2k / \delta)}}{\sqrt{T}}+\frac{(1+2 k) \log (2k / \delta)}{3 T} \\
&+ \frac{\sqrt{2 \log (2k / \delta)}}{\sqrt{n}}+\frac{2 \log (2k / \delta)}{3 n},
\end{split}
\end{equation}
for an ensemble of size $n$ where $t_{j} \stackrel{iid}{\sim} \operatorname{unif}(T)$ for $j \in [n]$. 
\end{theorem}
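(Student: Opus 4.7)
The plan is to decompose the target into three pieces: (a) an Exp3-style regret bound that controls $\sum_t \widetilde L^t_y$ for each comparator $y$; (b) a Freedman-type concentration that lifts $\sum_t \widetilde L^t_y$ to $\sum_t \widehat L_y(\theta^t)$ for each fixed $y$; and (c) a Bernstein concentration converting $\frac{1}{T}\sum_t \widehat L_y(\theta^t)$ to the ensemble average $\frac{1}{n}\sum_j \widehat L_y(\theta^{t_j})$. Pieces (b) and (c) each get a failure budget of $\delta/2$ and are union-bounded over the $k$ classes, producing the $\log(2k/\delta)$ factors. I expect (a) to be the main obstacle, since a naive handling of the importance-weighted estimator would yield a variance of order $k^2 T$ while \cref{asm:mistake-bound} only supplies an $O(C)$ budget; the resolution is that the uniform mixing $\gamma = 1/2$ aligns the estimator's second moment with the realized losses, so the mistake bound cancels exactly the right term.

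For (a), I apply the standard local-norm Hedge bound to $q^{t+1}$ driven by $\widetilde L^t$: for every $y$,
\[
\sum_{t=1}^T \widetilde L^t_y \leq \sum_{t=1}^T \langle q^t, \widetilde L^t\rangle + \frac{\log k}{\eta} + \eta \sum_{t=1}^T \langle q^t, (\widetilde L^t)^2\rangle,
\]
which is valid once $\eta \|\widetilde L^t\|_\infty \leq 1$. The mixing $p^t = \gamma/k + (1-\gamma) q^t$ with $\gamma = 1/2$ enforces $p^t_y \geq 1/(2k)$ (so $\|\widetilde L^t\|_\infty \leq 2k$) and $q^t_{y^t}/p^t_{y^t} \leq 1/(1-\gamma) = 2$. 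Combined with $L_{y,i} \in [0,1]$, the realized linear term is at most $2 L_{y^t, i^t}(\theta^t)$ and the realized second-moment term is at most $4k L_{y^t, i^t}(\theta^t)$. Summing in $t$ and invoking \cref{asm:mistake-bound} gives $\sum_t \widetilde L^t_y \leq 2C + \log k/\eta + 4\eta kC$. The prescribed $\eta = \sqrt{\log k/(4kC)}$ balances the last two terms at $4\sqrt{kC\log k}$, and the standing hypothesis $C \geq k\log k$ both absorbs this into $4C$ (producing the $6C/T$ contribution after dividing by $T$) and ensures $\eta \leq 1/(2k)$, validating the local-norm step.

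For (b), fix $y$ and set $D^t := \widetilde L^t_y - \widehat L_y(\theta^t)$. This is a martingale difference with $|D^t| \leq \widetilde L^t_y + \widehat L_y(\theta^t) \leq 2k+1$ and conditional variance $\mathbb{E}[(\widetilde L^t_y)^2 \mid \mathcal F^{t-1}] = \mathbb{E}_i[L_{y,i}(\theta^t)^2]/p^t_y \leq \widehat L_y(\theta^t)/p^t_y \leq 2k$, so the cumulative variance is at most $2kT$. Freedman's inequality, followed by a union bound over $y$ with budget $\delta/2$, then delivers the $\sqrt{4k\log(2k/\delta)/T}$ and $(1+2k)\log(2k/\delta)/(3T)$ terms after dividing by $T$. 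For (c), conditioning on $\theta^1,\ldots,\theta^T$ makes $\widehat L_y(\theta^{t_j}) \in [0,1]$ iid with mean $\frac{1}{T}\sum_t \widehat L_y(\theta^t)$; Bernstein's inequality with range $1$ and variance at most $1$, combined with a final union bound over $y$ using the remaining $\delta/2$, yields the $\sqrt{2\log(2k/\delta)/n}$ and $2\log(2k/\delta)/(3n)$ terms. Summing the three contributions and taking the maximum over $y$ recovers the stated bound.
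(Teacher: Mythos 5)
Your proposal is correct and follows essentially the same route as the paper's proof: a local-norm Hedge/Exp3 regret bound whose realized linear and second-moment terms are both controlled by the mistake bound (with $\gamma=\nicefrac{1}{2}$ giving $p^t_y \geq \nicefrac{1}{2k}$ and $q^t_{y^t}/p^t_{y^t}\leq 2$, and $C\geq k\log k$ both absorbing $4\sqrt{kC\log k}$ into $4C$ and validating $\eta\leq\nicefrac{1}{2k}$), followed by a Bernstein/Freedman martingale concentration per class and a second Bernstein step for the ensemble, with the same $\delta/(2k)$ union-bound allocation. All the quantitative steps match the paper's Lemmas \ref{lm:adv-regret}, \ref{lm:optimal-adv} and \ref{lm:concentration}, so there is nothing to flag.
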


To contextualize \Cref{thm:convergence}, %
let us consider the simple case of linear binary classification mentioned in \citet{shalev2016minimizing}.
In this setting SGD needs $\mathcal O (CN)$ iterations to obtain a consistent hypothesis. 
In contrast, the iteration requirement of FOL decomposes into a sum $\widetilde{\mathcal O} (C + N)$ which is much smaller since both $C$ and $N$ are large. %
When we are only concerned with the convergence of the worst class we show that CFOL can converge as $\widetilde{\mathcal O} (C + k)$ where usually $k \ll C$.
Connecting this back to our motivational example in the beginning of this section, this sum decomposition exactly captures our intuition. 
That is, adaptively focusing the class distribution can avoid the learning algorithm from needlessly going over all $k$ classes in order to improve just one of them.

From \Cref{thm:convergence} we also see that we need an ensemble of size $n=\Omega(\sqrt{\log (k / \delta)} / \varepsilon^2)$, which has only mild dependency on the number of classes $k$.
If we wanted to drive the worst class error $\varepsilon$ to zero the dependency on $\varepsilon$ would be problematic. 
However, for adversarial training, even in the best case of the CIFAR10 dataset, the \emph{average} error is almost $\nicefrac{1}{2}$.
We can expect the worst class error to be even worse, so that only a small $n$ is required.
In practice, %
a single model turns out to suffice.

Note that the maximum upper bounds the average, so by minimizing this upper bound as in \Cref{thm:convergence}, we are implicitly still minimizing the usual average loss.
In addition, \Cref{thm:convergence} shows that a mixing parameter of $\gamma=\nicefrac{1}{2}$ is sufficient for minimizing the worst class loss.
Effectively, the average loss is still directly being minimized, but only through half of the sampled examples.

\begin{table*}[t]
\centering
\caption{Accuracy on CIFAR10. For both clean test accuracy ($\mathrm{acc}_{\mathrm{clean}}$) and robust test accuracy ($\mathrm{acc}_{\mathrm{rob}}$) we report the average, 20\% worst classes and the worst class.
\rebuttal{We compare our method (CFOL-AT) with standard adversarial training (ERM-AT) and two baselines (LCVaR-AT and FOL-AT).}
CFOL-AT significantly improves the robust accuracy for both the worst class and the 20\% tail.
}
\label{tab:results-cifar10}
\begin{tabular}{llllll}
\toprule
                                    &             & \textbf{ERM-AT} &    \textbf{CFOL-AT} &   \textbf{LCVaR-AT} & \textbf{FOL-AT} \\
\midrule
\multirow{3}{*}{$\mathrm{{acc}}_{{\mathrm{{clean}}}}$} & Average &          0.8244 &  {\bfseries 0.8308} &              0.8259 &          0.8280 \\
                                    & 20\% tail &          0.6590 &  {\bfseries 0.7120} &              0.6540 &          0.6635 \\
                                    & Worst class &          0.6330 &  {\bfseries 0.6830} &              0.6340 &          0.6210 \\
\cline{1-6}
\multirow{3}{*}{$\mathrm{{acc}}_{{\mathrm{{rob}}}}$} & Average &          0.5138 &              0.5014 &  {\bfseries 0.5169} &          0.5106 \\
                                    & 20\% tail &          0.2400 &  {\bfseries 0.3300} &              0.2675 &          0.2480 \\
                                    & Worst class &          0.2350 &  {\bfseries 0.3200} &              0.2440 &          0.2370 \\
\bottomrule
\end{tabular}

\end{table*}

\begin{figure*}[t]
\centering
\includegraphics[width=0.49\linewidth]{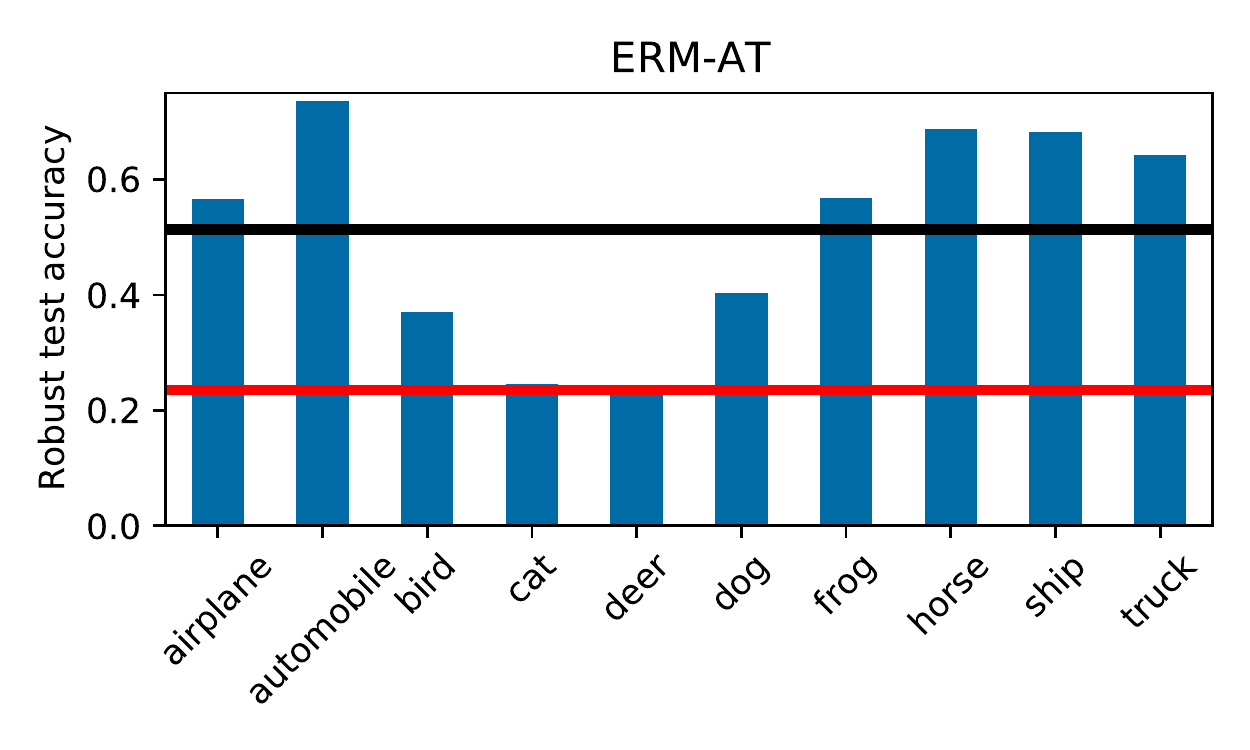}
\includegraphics[width=0.49\linewidth]{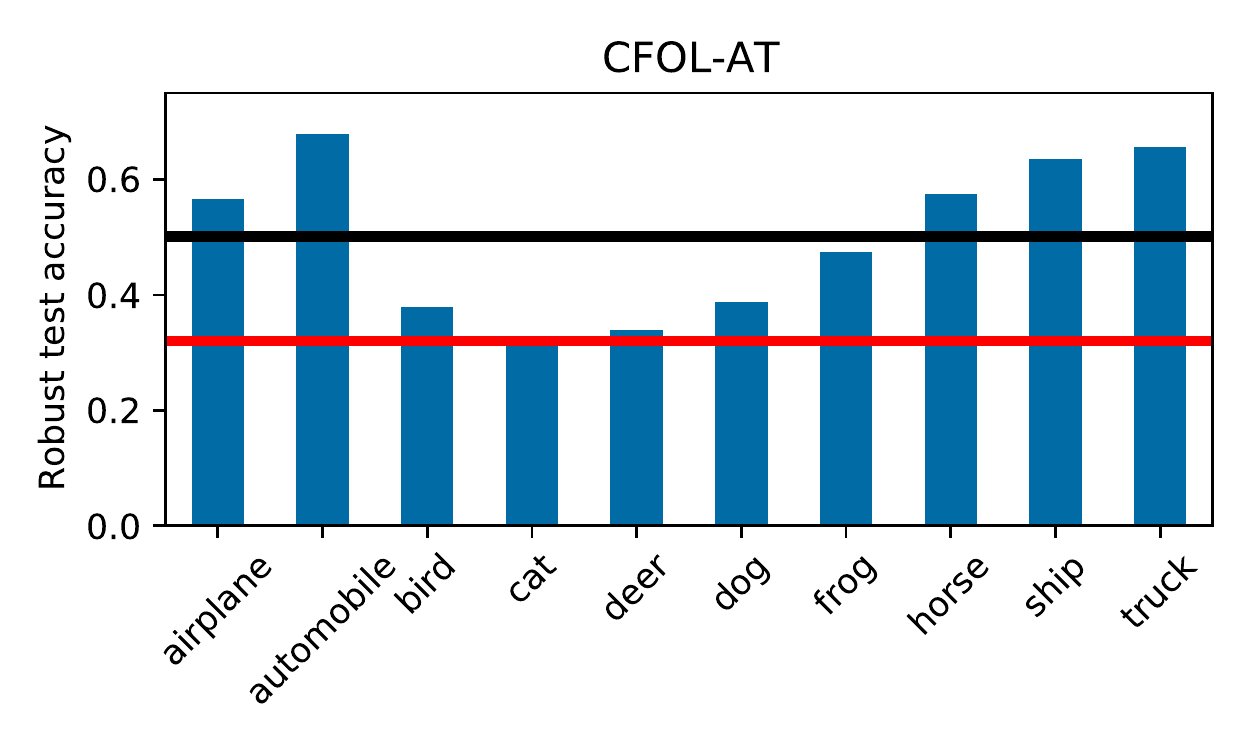}
\caption{The robust test accuracy for CFOL-AT and ERM-AT over classes. The horizontal black and red line depicts the average and worst class accuracy over the classes respectively.
The improvement in the minimum accuracy is notable when using CFOL-AT, while there is only marginal difference in the average accuracy.
{The evolution of accuracies for all methods can be found in \Cref{fig:history:accuracy} §\ref{app:experiments}.}
}
\label{fig:results-cifar10}
\end{figure*}

\section{Experiments}
\label{sec:experiments}

\begin{table*}[t]
\centering
\caption{Clean test accuracy ($\mathrm{acc}_{\mathrm{clean}}$) and robust test accuracy ($\mathrm{acc}_{\mathrm{rob}}$) on CIFAR100 and STL10.
\rebuttal{We compare our method (CFOL-AT) with standard adversarial training (ERM-AT) and two baselines (LCVaR-AT and FOL-AT).}
CFOL-AT consistently improves the worst class accuracy as well as the $20\%$ worst tail.
}
\label{fig:results-other}
\begin{tabular}{lllllll}
\toprule
      &                                     &             &     \textbf{ERM-AT} &    \textbf{CFOL-AT} &   \textbf{LCVaR-AT} & \textbf{FOL-AT} \\
\midrule
\multirow{6}{*}{CIFAR100} & \multirow{3}{*}{$\mathrm{{acc}}_{{\mathrm{{clean}}}}$} & Average &  {\bfseries 0.5625} &              0.5593 &              0.5614 &          0.5605 \\
      &                                     & 20\% tail &              0.2710 &  {\bfseries 0.3550} &              0.2960 &          0.2815 \\
      &                                     & Worst class &              0.0600 &  {\bfseries 0.2500} &              0.0600 &          0.1100 \\
\cline{2-7}
      & \multirow{3}{*}{$\mathrm{{acc}}_{{\mathrm{{rob}}}}$} & Average &              0.2816 &              0.2519 &  {\bfseries 0.2866} &          0.2718 \\
      &                                     & 20\% tail &              0.0645 &  {\bfseries 0.0770} &              0.0690 &          0.0630 \\
      &                                     & Worst class &              0.0000 &  {\bfseries 0.0400} &              0.0000 &          0.0000 \\
\cline{1-7}
\cline{2-7}
\multirow{6}{*}{STL10} & \multirow{3}{*}{$\mathrm{{acc}}_{{\mathrm{{clean}}}}$} & Average &  {\bfseries 0.7023} &              0.6826 &              0.6696 &          0.6890 \\
      &                                     & 20\% tail &              0.4119 &  {\bfseries 0.4594} &              0.3600 &          0.3837 \\
      &                                     & Worst class &              0.3725 &  {\bfseries 0.4475} &              0.3462 &          0.3562 \\
\cline{2-7}
      & \multirow{3}{*}{$\mathrm{{acc}}_{{\mathrm{{rob}}}}$} & Average &              0.3689 &              0.3755 &  {\bfseries 0.3864} &          0.3736 \\
      &                                     & 20\% tail &              0.0900 &  {\bfseries 0.1388} &              0.0944 &          0.0981 \\
      &                                     & Worst class &              0.0587 &  {\bfseries 0.1225} &              0.0650 &          0.0737 \\
\bottomrule
\end{tabular}

\end{table*}

\begin{table}[t]
\centering
\caption{
  Interpolation property of $\gamma$ on CIFAR10.
  By increasing the mixing parameter $\gamma$, CFOL-AT can closely match the average robust accuracy of ERM-AT while still improving the worst class accuracy.
  Standard deviations are computed over 3 independent runs.
  }
  \label{tab:interpolation}
\begin{tabular}{llll}
\toprule
$\gamma$ &                     \textbf{0.9} &         \textbf{0.7} &                     \textbf{0.5} \\
\midrule
Average     &  {\bfseries 0.5108} $\pm$ 0.0015 &  0.5091 $\pm$ 0.0037 &              0.5012 $\pm$ 0.0008 \\
20\% tail   &              0.2883 $\pm$ 0.0278 &  0.3079 $\pm$ 0.0215 &  {\bfseries 0.3393} $\pm$ 0.0398 \\
Worst class &              0.2717 $\pm$ 0.0331 &  0.2965 $\pm$ 0.0141 &  {\bfseries 0.3083} $\pm$ 0.0515 \\
\bottomrule
\end{tabular}

\end{table}

We consider the adversarial setting where the constraint set of the attacker $\mathcal S$ is an $\ell_\infty$-bounded attack, which is the strongest norm-based attack and has a natural interpretation in the pixel domain.  %
We test on three datasets with different dimensionality, number of examples per class and number of classes. 
Specifically, we consider CIFAR10, CIFAR100 and STL10 \citep{krizhevsky2009learning,coates2011analysis} (see \Cref{app:setup} for further details).

\textbf{Hyper-parameters} Unless otherwise noted, we use the standard adversarial training setup of a ResNet-18 network \citep{he2016deep} with a learning rate $\tau = 0.1$, momentum of $0.9$, weight decay of $5\cdot10^{-4}$, batch size of $128$ with a piece-wise constant weight decay of $0.1$ at epoch $100$ and $150$ for a total of $200$ epochs according to \citet{madry2017towards}.
For the attack we similarly adopt the common attack radius of $8/255$ using $7$ steps of projected gradient descent (PGD) with a step-size of $2/255$ \citep{madry2017towards}.
For evaluation we use the stronger attack of 20 step throughout, except for \Cref{tab:autoattack} §\ref{app:experiments} where we show robustness against AutoAttack \citep{croce2020reliable}.

\textbf{Baselines} With this setup we compare our proposed method, CFOL, against empirical risk minimization (ERM), labeled conditional value at risk (LCVaR) \citep{xu2020class} and focused online learning (FOL) \citep{shalev2016minimizing}.
\rebuttal{We add the suffix "AT" to all methods to indicate that the training examples are adversarially perturbed according to adversarial training of \cite{madry2017towards}}.
We consider ERM-AT as the core baseline, while we also implement FOL-AT and LCVaR-AT as alternative methods that can improve the worst performing class.
For fair comparison, and to match existing literature, we do early stopping based on the average robust accuracy on a hold-out set.
The additional hyperparameters for CFOL-AT, FOL-AT and LCVaR-AT are chosen to optimize the robust accuracy for the worst class on CIFAR10.
This parameter choice is then used as the basis for the subsequent datasets.
More details on hyperparameters and implementation can be found in \Cref{app:hyperparams} and \Cref{app:impl} respectively. %
\rebuttal{In \Cref{tab:reweight} §\ref{app:experiments} we additionally provide experiments for a variant of CFOL-AT which instead reweighs the gradients.}
\looseness=-1

\textbf{Metrics} We report the average accuracy, the worst class accuracy and the accuracy across the 20\% worst classes (referred to as the 20\% tail) for both clean ($\mathrm{acc}_{\mathrm{clean}}$) and robust accuracy ($\mathrm{acc}_{\mathrm{rob}}$).
We note that the aim is not to provide state-of-the-art accuracy but rather provide a fair comparison between the methods.

The first core experiment is conducted on CIFAR10. 
In \Cref{tab:results-cifar10} the quantitative results are reported with the accuracy per class illustrated in \Cref{fig:results-cifar10}. The results reveal that all methods other than {ERM-AT} improve the worst performing class with CFOL-AT obtaining higher accuracy in the weakest class than all methods in both the clean and the robust case. 
For the robust accuracy the $20\%$ worst tail is improved from $24.0\%$ (ERM-AT) to $26.7\%$ using existing method in the literature (LCVaR-AT).
CFOL-AT increases the former accuracy to $33.0\%$ which is approximately a $40\%$ increase over ERM-AT.
The reason behind the improvement is apparent from \Cref{fig:results-cifar10} which shows how the robust accuracy of the classes \texttt{cat} and \texttt{deer} are now comparable with  \texttt{bird} and \texttt{dog}.
{A simple baseline which runs training twice is not able to improve the worst class robust accuracy (\Cref{tab:baseline} §\ref{app:experiments}).}

The results for the remaining datasets can be found in \Cref{fig:results-other}. The results exhibit similar patterns to the experiment on CIFAR10, where CFOL-AT improves both the worst performing class and the  20\% tail. %
For ERM-AT on STL10, the relative gap between the robust accuracy for the worst class ($5.9\%$) and the average robust accuracy ($36.9\%$) is even more pronounced.
Interestingly, in this case CFOL-AT improves \emph{both} the accuracy of the worst class to $12.3\%$ and the average accuracy to $37.6\%$.
In CIFAR100 we observe that both ERM-AT, LCVaR-AT and FOL-AT have  $0\%$ robust accuracy for the worst class, which CFOL-AT increases the accuracy to $4\%$.

In the case of CIFAR10, CFOL-AT leads to a minor drop in the average accuracy when compared against ERM-AT.
For applications where average accuracy is also important, we note that the mixing parameter $\gamma$ in CFOL-AT interpolates between focusing on the average accuracy and the accuracy of the worst class.
We illustrate this interpolation property on CIFAR10 in \Cref{tab:interpolation}, where we observe that CFOL-AT can match the average accuracy of ERM-AT closely with $51\%$ while still improving the worst class accuracy to $30\%$.

\begin{table}[t]
\centering
\caption{Comparison with larger pretrained models from the literature. 
ERM-AT refers to the training scheme of \citet{madry2017towards} and uses the shared weights of the larger model from \citet{robustness}.
We refer to the pretrained Wide ResNet-34-10 from \citep{zhang2019theoretically} as \emph{TRADES}. 
For fair comparison we apply CFOL-AT to a Wide ResNet-34-10 (with $\gamma=0.8$), while using the test set as the hold-out set similarly to the pretrained models.
We observe that CFOL-AT improves the worst class and $20\%$ tail for both clean and robust accuracy.
}
\label{tab:existing-models}
\begin{adjustbox}{center}
\begin{tabular}{lllll}
\toprule
                                    &             &     \textbf{TRADES} & \textbf{ERM-AT} &    \textbf{CFOL-AT} \\
\midrule
\multirow{3}{*}{$\mathrm{{acc}}_{{\mathrm{{clean}}}}$} & Average &              0.8492 &          0.8703 &  {\bfseries 0.8743} \\
                                    & 20\% tail &              0.6845 &          0.7220 &  {\bfseries 0.7595} \\
                                    & Worst class &              0.6700 &          0.6920 &  {\bfseries 0.7500} \\
\cline{1-5}
\multirow{3}{*}{$\mathrm{{acc}}_{{\mathrm{{rob}}}}$} & Average &  {\bfseries 0.5686} &          0.5490 &              0.5519 \\
                                    & 20\% tail &              0.3445 &          0.3065 &  {\bfseries 0.3575} \\
                                    & Worst class &              0.2810 &          0.2590 &  {\bfseries 0.3280} \\
\bottomrule
\end{tabular}

\end{adjustbox}
\end{table}

In the supplementary, we additionally assess the performance in the challenging case of a test time attack that differs from {the attack used for training} (see \Cref{tab:autoattack} §\ref{app:experiments} {for evaluation on AutoAttack}).
The difference between CFOL-AT and ERM-AT becomes even more pronounced.
The worst class accuracy for ERM-AT drops to as low as $12.6\%$ on CIFAR10, while CFOL-AT improves on this accuracy by more than $60\%$ to an accuracy of $20.7\%$.
We also carry out experiments under a larger attack radius (see \Cref{tab:ballsize} §\ref{app:experiments}).
Similarly, we observe an accuracy drop of the worst class on CIFAR10 to $9.3\%$ for ERM-AT, which CFOL-AT improves to $17\%$.
{We also conduct experiments with the VGG-16 architecture, on Tiny ImageNet, \rbl{Imagenette} and under class imbalanced (\rbl{\Cref{tab:vgg,tab:tinyimagenet,tab:imbalance,tab:imagenette}} §\ref{app:experiments}).}
In all experiments CFOL-AT \emph{consistently} improves the accuracy with respect to both the worst class and the 20\% tail.
We additionally provide results for early stopped models using the best worst class accuracy from the validation set (see \Cref{tab:earlystop-worst-class} §\ref{app:experiments}).
Using the worst class for early stopping improves the worst class accuracy as expected, but is not sufficient to make ERM-AT consistently competitive with CFOL-AT. 
\looseness=-1

Even when compared with \emph{larger} pretrained models from the literature we observe that CFOL-AT on the smaller ResNet-18 model has a better worst class robust accuracy (see \Cref{tab:existing-models} for the larger models). %
For fair comparison we also apply CFOL-AT to the larger Wide ResNet-34-10 model%
, while using the test set as the hold-out set similarly to the pretrained models. 
We observe that CFOL-AT improves both the worst class and $20\%$ tail when compared with both pretrained ERM-AT and TRADES \citep{zhang2019theoretically}.
Whereas pretrained ERM-AT achieves $30.65\%$ accuracy for the 20\% tail, CFOL-AT achieves $35.75\%$.
{While the average robust accuracy of TRADES is higher, this is achieved by trading off clean accuracy.}
\looseness=-1

\textbf{Discussion} We find that CFOL-AT consistently improves the accuracy for both the worst class and the 20\% tail across the three datasets.
When the mixing parameter $\gamma = 0.5$, this improvement usual comes at a cost of the average accuracy.
However, we show that CFOL-AT can closely match the average accuracy of ERM-AT by picking $\gamma$ larger, while maintaining a significant improvement for the worst class.

Overall, our experimental results validate our intuitions that (a) the worst class performance is often overlooked and this might be a substantial flaw for safety-critical applications and (b) that algorithmically focusing on the worst class can alleviate this weakness.
Concretely, CFOL provides a simple method for improving the worst class, which can easily be added on top of existing training setups.
\looseness=-1

\section{Conclusion}
\label{sec:conclusion}

In this work, we have introduced a method for class focused online learning (CFOL), which samples from an adversarially learned distribution over classes.
We establish high probability convergence guarantees for the worst class using a specialized regret analysis.
\rbl{In the context of adversarial examples, 
we consider an adversarial threat model in which the attacker can choose the class to evaluate in addition to the perturbation (as shown in \Cref{eq:minmax-risk}).
This formulation allows us to develop a training algorithm which focuses on being robust across \emph{all} the classes as oppose to only providing robustness on average.}
We conduct a thorough empirical validation on five datasets, and the results consistently show improvements in adversarial training over the worst performing classes.

\section{Acknowledgements}
\label{sec:acknowledgements}
 This project has received funding from the European Research Council (ERC) under the European Union's Horizon 2020 research and innovation programme (grant agreement n° 725594 - time-data).
 This work was supported by Zeiss.
 This work was supported by Hasler Foundation Program: Hasler Responsible AI (project number 21043).

\bibliographystyle{tmlr}
\bibliography{FAE_TMLR2022}

\appendix
\newpage

\section{Convergence analysis}
\label{app:convergence}
\subsection{Preliminary and notation}

Consider the abstract online learning problem where for every $t\in [T]$ the player chooses an action $x^t$ and subsequently the environment reveals the loss function $l(\cdot, y^t)$.
As traditional in the online learning literature we will measure performance in terms of regret, which compares our sequence of choices $\{x^t\}_{t=1}^T$ with a fixed strategy in hindsight $u$,
\begin{equation}
\mathcal{R}_{T}(u)
= \sum_{t=1}^{T} l({x}^{t},y^t)-\min _{{x}} \sum_{t=1}^{T} l({u}, y^t).
\end{equation}
If we, instead of minimizing over losses $l$, maximize over rewards $r$ we define regret as,
\begin{equation}
\mathcal{R}_{T}(u)
= \max _{{x}} \sum_{t=1}^{T} r({u}, y^t) - \sum_{t=1}^{T} r({x}^{t},y^t).
\end{equation}
When we allow randomized strategy, as in the bandit setting, $\mathcal R_T$ becomes a random variable that we wish to upper bound with high probability.
For convenience we include an overview over the notation defined and used in \Cref{sec:setup} and \Cref{sec:method} below.

\bgroup
\def\arraystretch{1.5}
\begin{tabular}{p{2in}p{4in}}
$k$ & Number of classes \\
$T$ & Number of iterations \\
$p^t_y$ & The $y^{\text{th}}$ index of the $t^{\text{th}}$ iterate \\
$\mathbbm{1}_{\{\mathrm{boolean}\}}$ & The indicator function \\
$\operatorname{unif}(n)$ & Discrete uniform distribution over $n$ elements \\
$\mathcal{N}_{y}$ & Set of data point indices for class $y \in [k]$ \\
$N=\sum_{y=1}^k |\mathcal{N}_{y}|$ & The size of the data set \\
$L_{y, i}\left(\theta\right):=\max _{\delta \in \mathcal{S}} \ell\left(\theta, x_{i}+\delta, y\right)$ & Loss on a particular example \\
$\widehat{L}_{y}(\theta):=\frac{1}{\left|\mathcal{N}_{y}\right|} \sum_{i \in \mathcal{N}_{y}} L_{y, i}\left(\theta\right)$ & The empirical class-conditioned risk \\
$\widehat{L}\left(\theta\right):=(\widehat{L}_{1}\left(\theta\right), \ldots, \widehat{L}_{k}\left(\theta\right))^{\top}$ & The vector of all empirical class-conditioned risks \\
$L_{y}^{t} := L_{y, i}\left(\theta^t\right)$ & Class-conditioned estimator at iteration $t$ with $i \sim \operatorname{unif}(|\mathcal{N}_y|)$ \\
\end{tabular}
\egroup

\subsection{Convergence results}

We restate \Cref{algo:cfol} while leaving out the details of the classifier for convenience.
Initialize $w^0$ such that $q^0$ and $p^0$ are uniform. 
Then Exp3 proceeds for every $t\in[T]$ as follows:

\begin{enumerate}
  \item Draw class $y^{t} \sim p^{t}$
  \item Observe a scalar reward $L^{t}_{y^{t}}$
  \item Construct estimator $\widetilde{L}^{t}_y = L^{t}_{y^t} \mathbbm{1}_{\{y=y^{t}\}} / p_{y}^{t} \ \forall y$
  \item Update distribution\\
  $w^{t+1}=w^{t}-\eta \widetilde{L}^{t}$\\
  $q_{y}^{t+1}=\exp \left(w_{y}^{t+1}\right) / \sum_{y=1}^{k} \exp \left(w_{y}^{t+1}\right)  \ \forall y$\\
  $p_{y}^{t+1}=\gamma \frac{1}{k}+(1-\gamma) q_{y}^{t+1} \ \forall y$
\end{enumerate}
We can bound the regret of Exp3 \citep{auerNonstochasticMultiarmedBandit2002a} in our setting, even when the mixing parameter $\gamma$ is not small as otherwise usually required, by following a similar argument as \citet{shalev2016minimizing}.

For this we will use the relationship between $p$ and $q$ throughout. 
From $p_y = \frac{\gamma}{k}+(1-\gamma)q_y$ it can easily be verified that,
\begin{equation}
\label{eq:dist-ineq}
\frac{1}{p_y} \leq \frac k\gamma\text{ and }\frac {q_y}{p_y}\leq \frac{1}{1-\gamma}\text{ for all $y$}.
\end{equation}
\begin{lemma}
\label{lm:adv-regret}
If Exp3 is run on bounded rewards $L^t_{y^t} \in [0,1]$ $\forall t$ with $\eta \leq \gamma / k$ then 
\begin{equation}
\mathcal{R}_{T}^{\mathrm{adv}}(u)
=\sum_{t=1}^{T} \braket{u, \widetilde{L}^{t}} - \sum_{t=1}^{T} \braket{q^t, \widetilde{L}^{t}} 
\leq \frac{\log k}{\eta}+\frac{\eta k}{(1-\gamma) \gamma} \sum_{t=1}^{T} L_{y}^{t}
\end{equation}
\end{lemma}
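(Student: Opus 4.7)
The plan is to run the standard potential-function analysis of exponential weights, specialised to the bandit-with-mixing setting of Exp3. The argument has four steps: set up the potential, upper-bound its step-to-step ratio using the mixing-induced bound on $\widetilde L$, lower-bound it directly against a fixed comparator $u$, and finally control the residual second-order variance term using the two mixing inequalities already collected in the excerpt (namely $1/p^t_y \leq k/\gamma$ and $q^t_y/p^t_y \leq 1/(1-\gamma)$).

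Concretely, I would define the potential $W^t := \sum_{y=1}^k \exp(\eta \sum_{s=1}^{t-1} \widetilde{L}^s_y)$, so that $q^t_y$ is proportional to $\exp(\eta \sum_{s<t} \widetilde{L}^s_y)$ and $W^1 = k$. The lemma will then follow from matching upper and lower bounds on $\log(W^{T+1}/W^1)$. For the lower bound, the elementary fact $\log\sum_y \exp(a_y) \geq \max_y a_y \geq \braket{u,a}$ for any $u \in \Delta_k$, applied to $a_y = \eta\sum_t \widetilde{L}^t_y$, gives $\log(W^{T+1}/W^1) \geq \eta\sum_t \braket{u, \widetilde{L}^t} - \log k$.

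For the upper bound, I would telescope step-to-step: $W^{t+1}/W^t = \sum_y q^t_y \exp(\eta \widetilde{L}^t_y)$. Here is where the step-size condition $\eta \leq \gamma/k$ enters. Mixing forces $p^t_y \geq \gamma/k$, hence $\widetilde{L}^t_y \leq L^t_{y^t}/p^t_y \leq k/\gamma$, so $\eta\widetilde{L}^t_y \leq 1$. Under this condition the inequality $\exp(x) \leq 1 + x + x^2$ (valid for $x \leq 1$), followed by $1+z \leq e^z$, yields $W^{t+1}/W^t \leq \exp(\eta \braket{q^t, \widetilde L^t} + \eta^2 \sum_y q^t_y (\widetilde L^t_y)^2)$. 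Telescoping, combining with the lower bound, and dividing by $\eta$ produces
\begin{equation*}
\sum_t \braket{u, \widetilde{L}^t} - \sum_t \braket{q^t, \widetilde{L}^t} \;\leq\; \tfrac{\log k}{\eta} + \eta \sum_t \sum_y q^t_y (\widetilde{L}^t_y)^2 .
\end{equation*}
The remaining task is to bound the residual $\sum_y q^t_y (\widetilde L^t_y)^2$. Since the estimator is supported on a single class $y^t$, this equals $\tfrac{q^t_{y^t}}{p^t_{y^t}}\cdot\tfrac{(L^t_{y^t})^2}{p^t_{y^t}}$; applying the two bounds $q^t_y/p^t_y\leq 1/(1-\gamma)$ and $1/p^t_y \leq k/\gamma$, together with $(L^t_{y^t})^2 \leq L^t_{y^t}$ (as $L^t_{y^t}\in[0,1]$), controls it by $k L^t_{y^t}/((1-\gamma)\gamma)$. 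Summing in $t$ recovers the stated bound.

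The one step that does not come for free from textbook Exp3 is the double role of $\gamma$: it simultaneously lower-bounds $p^t_y$, which is what makes the step-size condition $\eta \leq \gamma/k$ (rather than the much smaller $\gamma/(kT)$-type condition of vanilla Exp3) compatible with $\exp(x)\leq 1+x+x^2$, and upper-bounds $q^t_y/p^t_y$, which produces the clean $(1-\gamma)\gamma$ denominator and collapses the would-be variance $\sum_y (\widetilde L^t_y)^2/p^t_y$ to a term proportional to $L^t_{y^t}$. I expect this decoupling to be the main obstacle: once it is recognised, the cumulative variance reduces to $\sum_t L^t_{y^t}$, which is exactly the form that lets \Cref{thm:convergence} later absorb it via the model's mistake bound (\Cref{asm:mistake-bound}).
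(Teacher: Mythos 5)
Your proposal is correct and follows essentially the same route as the paper: the paper invokes the generic second-order regret bound for exponential weights (citing \citet[Thm 2.22]{shalev2011online}, valid under $\eta \leq 1/\widetilde{L}^t_y$, which your condition $\eta\widetilde{L}^t_y \leq 1$ reproduces), whereas you rederive that bound from scratch via the standard potential argument, and then both proofs finish identically by collapsing the variance term $\sum_y q^t_y(\widetilde{L}^t_y)^2 = \tfrac{q^t_{y^t}}{(p^t_{y^t})^2}(L^t_{y^t})^2 \leq \tfrac{k}{(1-\gamma)\gamma}L^t_{y^t}$ using $1/p^t_y \leq k/\gamma$, $q^t_y/p^t_y \leq 1/(1-\gamma)$ and $(L^t_{y^t})^2 \leq L^t_{y^t}$. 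No gaps; the only difference is that you inline the proof of the cited exponential-weights lemma rather than using it as a black box.
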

\begin{proof}
We can write the estimator vector as $\widetilde{L}^t = \tfrac{L_{y^t}^t}{p^t_{y^t}} e_{y^t}$ where $e_{y^t}$ is a canonical basis vector. %
By assumption we have $L^t_{y^t}\in [0,1]$.
Combining this with the bound on $\nicefrac 1{p_y}$ in \Cref{eq:dist-ineq} we have $\widetilde{L}^t_y \leq \nicefrac{k}{\gamma}$ for all $y$.
This lets us invoke \citep[Thm 2.22]{shalev2011online} as long as the step-size $\eta$ is small enough.
Specifically, we have that if $\eta \leq \nicefrac 1{\widetilde{L}^t_y} \leq \nicefrac{\gamma}{k}$ %
\begin{equation}
\sum_{t=1}^{T} \braket{u, \widetilde{L}^{t}} - \sum_{t=1}^{T} \braket{q^t, \widetilde{L}^{t}}
\leq \frac{\log k}{\eta} + \eta \sum_{t=1}^T \sum_{y=1}^k q^t_y(\widetilde{L}^{t}_y)^2.
\end{equation}
By expanding $\widetilde{L}_y^t$ and applying the inequalities from \Cref{eq:dist-ineq} we can get rid of the dependency on $q$ and $p$,
\begin{equation}
\begin{split}
\frac{\log k}{\eta} + \eta \sum_{t=1}^T \sum_{y=1}^k q^t_y(\widetilde{L}^{t}_{y})^2 
& \leq \frac{\log k}{\eta} + \eta \sum_{t=1}^T \frac{q^t_{y^t}}{(p^t_{y^t})^2}(L^{t}_{y^t})^2 \\
& \leq \frac{\log k}{\eta} + \eta \sum_{t=1}^T \frac{k}{(1-\gamma)\gamma}(L^{t}_{y^t})^2 \\
& \leq \frac{\log k}{\eta} + \frac{\eta k}{(1-\gamma)\gamma} \sum_{t=1}^T L^{t}_{y^t},
\end{split}
\end{equation}
where the last line uses the boundedness assumption $L^{t}_{y} \in [0,1] \ \forall y$.
\end{proof}
It is apparent that we need to control the sum of losses.
We will do so by assuming that the model admits %
a mistake bound as in \citet{shalev2016minimizing,shalev2011online}.
\begin{repassumption}{asm:mistake-bound}
For any sequence of classes $(y^1,...,y^T) \in [k]^T$ and class conditioned sample indices $(i^1,...,i^T)$ with $i^t \in \mathcal{N}_{y^t}$ the model enjoys the following bound for some $C'<\infty$ and $C=\max\{k \log k, C'\}$,
\begin{equation}
\sum_{t=1}^T L_{y^t,i^t}(\theta^t) \leq C.
\end{equation}
\end{repassumption}
Recall that we need to bound $L_{y}^{t} := L_{y,i^t}(\theta^t)$, where $y$ are picked adversarially and $i^t$ are sampled uniformly, so the above is sufficient. 
\begin{lemma}
\label{lm:optimal-adv}
If the model satisfies \Cref{asm:mistake-bound} and \cref{algo:cfol} is run on bounded rewards $L_{y^t,i^t}(\theta^t) \in [0,1]$ $\forall t$ with step-size $\eta=\sqrt{\log k /(4 k C)}$ and mixing coefficient $\gamma = \nicefrac{1}{2}$ %
then
\begin{equation}
\sum_{t=1}^{T} \braket{u, \widetilde{L}^{t}}
\leq 6C.
\end{equation}
\end{lemma}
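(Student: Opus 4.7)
The plan is to pipe everything through the adversarial regret bound of \Cref{lm:adv-regret} and then use the mistake bound assumption twice: once to control $\sum_t L^t_{y^t}$ on the right-hand side, and once (implicitly, through $C \geq k \log k$) to verify the step-size precondition and to absorb the $\sqrt{kC\log k}$ term into a multiple of $C$.

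First I would check that the step-size hypothesis $\eta \leq \gamma/k$ of \Cref{lm:adv-regret} is met. With $\gamma = \nicefrac{1}{2}$ and $\eta = \sqrt{\log k/(4kC)}$, the condition $\eta \leq 1/(2k)$ is equivalent to $k\log k \leq C$, which holds by \Cref{asm:mistake-bound} because $C = \max\{k\log k, C'\}$. Having unlocked \Cref{lm:adv-regret}, I would rearrange it to isolate $\sum_t \braket{u, \widetilde L^t}$ and obtain
\begin{equation}
\sum_{t=1}^T \braket{u, \widetilde L^t} \leq \sum_{t=1}^T \braket{q^t, \widetilde L^t} + \frac{\log k}{\eta} + \frac{\eta k}{(1-\gamma)\gamma}\sum_{t=1}^T L^t_{y^t}.
\end{equation}

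Next I would bound the $\braket{q^t,\widetilde L^t}$ term. Because $\widetilde L^t$ is supported on the single coordinate $y^t$, one has $\braket{q^t, \widetilde L^t} = (q^t_{y^t}/p^t_{y^t}) L^t_{y^t}$, and the bound $q_y/p_y \leq 1/(1-\gamma)$ from \Cref{eq:dist-ineq} combined with $\gamma = \nicefrac{1}{2}$ gives $\sum_t \braket{q^t, \widetilde L^t} \leq 2\sum_t L^t_{y^t}$. Applying \Cref{asm:mistake-bound} then yields $\sum_t \braket{q^t, \widetilde L^t} \leq 2C$. The same assumption controls the third term in the display.

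It remains to plug in $\eta = \sqrt{\log k/(4kC)}$ and $\gamma = \nicefrac{1}{2}$ and check the arithmetic. A direct computation gives $\log k/\eta = 2\sqrt{kC\log k}$ and $\eta k/((1-\gamma)\gamma) = 2\sqrt{k\log k/C}$, so that the second and third terms together contribute at most $2\sqrt{kC\log k} + 2\sqrt{kC\log k} = 4\sqrt{kC\log k}$. The only real step left is to absorb this root-term into $C$: since $C \geq k\log k$, we have $\sqrt{kC\log k} \leq \sqrt{C\cdot C} = C$, so $4\sqrt{kC\log k} \leq 4C$, and summing with the $2C$ from the previous paragraph gives the claimed $6C$.

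There is no serious obstacle here; the lemma is essentially bookkeeping on top of \Cref{lm:adv-regret}. The only place where the argument is delicate is the dual role of $C \geq k\log k$: it legitimizes the step-size choice in \Cref{lm:adv-regret} and simultaneously makes the $\sqrt{kC\log k}$ term of the same order as $C$, so both uses should be flagged explicitly when writing out the proof.
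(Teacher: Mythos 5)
Your proposal is correct and follows essentially the same route as the paper's own proof: rearrange \Cref{lm:adv-regret}, bound $\braket{q^t,\widetilde L^t}$ via $q_y/p_y \leq 1/(1-\gamma)$, invoke \Cref{asm:mistake-bound} to get the $2C$ term, and use $C \geq k\log k$ both to validate $\eta \leq \gamma/k$ and to absorb $4\sqrt{kC\log k}$ into $4C$. The arithmetic checks out and matches the paper's step for step, including the explicit flagging of the dual role of $C \geq k\log k$.
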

\begin{proof}
By expanding and rearranging \Cref{lm:adv-regret},
\begin{equation}
\sum_{t=1}^{T} \braket{u, \widetilde{L}^{t}}
\leq \frac{\log k}{\eta}
+\left(
  \frac{\eta k}{(1-\gamma) \gamma} 
  + \frac{q^t_{y^t}}{p^t_{y^t}}
\right)
\sum_{t=1}^{T} L_{y}^{t},
\end{equation}
as long as $\eta \leq \nicefrac{\gamma}{k}$.
Then by using our favorite inequality in \Cref{eq:dist-ineq} and under \Cref{asm:mistake-bound},
\begin{equation}
\frac{\log k}{\eta}
+\left(
  \frac{\eta k}{(1-\gamma) \gamma} 
  + \frac{q^t_{y^t}}{p^t_{y^t}}
\right)
\sum_{t=1}^{T} L_{y}^{t}
\leq 
\frac{\log k}{\eta}
+\left(
  \frac{\eta k}{(1-\gamma) \gamma} 
  + \frac{1}{1-\gamma}
\right)
C,
\end{equation}
To maximize $(1-\gamma)\gamma)$ we now pick $\gamma=\nicefrac{1}{2}$ so that,
\begin{equation}
\frac{\log k}{\eta}+\left(\frac{\eta k}{(1-\gamma) \gamma}+\frac{1}{1-\gamma}\right) C
\leq \frac{\log k}{\eta}+\left(4\eta k+2\right) C.
\end{equation}
For $\eta$, notice that it appears in two of the terms.
To minimize the bound respect to $\eta$
we pick $\eta = \sqrt{\log k / (4 k C)}$ such that $\frac{\log k}{\eta}=4\eta k C$, which leaves us with,
\begin{equation}
\frac{\log k}{\eta}+(4 \eta k+2) C
\leq 2\sqrt{4kC\log k} + 2C.
\end{equation}
From the first term it is clear that since $C \geq k \log k$ by assumption then the original step-size requirement of $\eta \leq \nicefrac{1}{2k}$ is satisfies.
In this case we can additionally simplify further,
\begin{equation}
2 \sqrt{4 k C \log k}+2 C \leq 6C.
\end{equation}
\end{proof}
This still only gives us a bound on a stochastic object.
We will now relate it to the empirical class conditional risk $\widehat{L}_y(\theta)$ by using standard concentration bounds.
To be more precise, we want to show that by picking $u=e_y$ in $\langle u, \widetilde{L}^{t}\rangle$ we concentrate to $\widehat{L}_y(\theta)$.
Following \citet{shalev2016minimizing} we adopt their use of a Bernstein's type inequality.
\begin{lemma}[e.g. {\citet[Thm. 1.2]{audibert201116}}]
\label{lm:bernstein}
Let $A_1,...,A_T$ be a martingale difference sequence with respect to a Markovian sequence $B_1,...,B_T$ and assume $\left|A_{t}\right| \leq V$ and $\mathbb{E}\left[A_{t}^{2} \mid B_{1}, \ldots, B_{t}\right] \leq s$. Then for any $\delta \in(0,1)$,
\begin{equation}
\mathbb{P} \left( \frac 1T \sum_{t=1}^{T} A_{t} \leq \frac{\sqrt{2 s \log \left(\nicefrac{1}{\delta}\right)}}{\sqrt{T}}+\frac{V \log \left(\nicefrac{1}{\delta}\right)}{3T} \right) \geq 1 - \delta.
\end{equation}
\end{lemma}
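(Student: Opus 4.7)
The plan is to apply the Chernoff--Cram\'er exponential-moment method to the martingale partial sum $M_T := \sum_{t=1}^T A_t$. For any $\lambda>0$, Markov's inequality on $e^{\lambda M_T}$ yields
\begin{equation}
\mathbb{P}(M_T \geq \tau) \;\leq\; e^{-\lambda \tau}\, \mathbb{E}\!\left[e^{\lambda M_T}\right],
\end{equation}
so the whole proof reduces to controlling $\mathbb{E}[e^{\lambda M_T}]$ and then optimizing the free parameters $\lambda$ and $\tau$.

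The central analytic step is a conditional single-step MGF bound. Let $\mathcal{F}_{t-1} := \sigma(B_1,\dots,B_{t-1})$, so that $A_t$ is a martingale difference with $|A_t|\le V$ and conditional second moment at most $s$. Expanding $e^{\lambda A_t}-1-\lambda A_t = \sum_{k\ge 2}\lambda^k A_t^k/k!$, using $|A_t|^k\le V^{k-2}A_t^2$ to reduce higher moments to the variance bound, summing the resulting geometric series, and applying $1+u\le e^u$ gives the classical Bennett/Bernstein estimate
\begin{equation}
\mathbb{E}\!\left[e^{\lambda A_t}\,\middle|\,\mathcal{F}_{t-1}\right] \;\leq\; \exp\!\left(\frac{\lambda^2 s/2}{1-\lambda V/3}\right), \qquad 0<\lambda<3/V.
\end{equation}
Peeling off factors using the tower property (valid because $M_{t-1}$ is $\mathcal{F}_{t-1}$-measurable and $e^{\lambda M_T} = e^{\lambda M_{t-1}} e^{\lambda A_t}$) propagates this bound across all $T$ steps, yielding
\begin{equation}
\mathbb{E}\!\left[e^{\lambda M_T}\right] \;\leq\; \exp\!\left(\frac{T\lambda^2 s/2}{1-\lambda V/3}\right).
\end{equation}

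To close the argument, I would set $\tau = \sqrt{2sT\log(1/\delta)} + \tfrac{V\log(1/\delta)}{3}$ and make the standard Bernstein choice $\lambda = \tau / (sT + V\tau/3)$, which lies in $(0, 3/V)$. Plugging these into the Chernoff bound and simplifying algebraically makes the right-hand side at most $\delta$, so $\mathbb{P}(M_T \ge \tau) \le \delta$; rescaling the event by $1/T$ gives exactly the stated tail inequality on $\tfrac{1}{T}\sum_t A_t$.

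The only nontrivial ingredient is the per-step MGF estimate, whose proof is a standard power-series computation; the martingale/Markovian structure contributes nothing beyond the tower property. Since the statement is already quoted essentially verbatim from \citet[Thm.~1.2]{audibert201116}, the cleanest presentation is simply to invoke that reference, but the derivation sketched above is the textbook route if a self-contained argument is desired.
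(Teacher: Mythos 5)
Your overall route is the right one, and it is worth noting that the paper itself offers no proof of this lemma at all---it is imported as a known result from \citet[Thm.~1.2]{audibert201116}---so simply invoking the reference, as you suggest at the end, is exactly what the authors do. Your per-step moment-generating-function bound, the tower-property peeling, and the Chernoff--Markov reduction are all correct and standard (one small bookkeeping point: the lemma conditions the second moment on $B_1,\ldots,B_t$, so the filtration must be chosen so that $A_t$ is centered given the past while its conditional second moment is bounded by $s$; this is harmless).

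However, your final optimization step fails as written. With your ``standard Bernstein choice'' $\lambda = \tau/(sT + V\tau/3)$, the exponent in the Chernoff bound evaluates to exactly $\frac{\tau^2/2}{sT + V\tau/3}$, and plugging in $\tau = \sqrt{2sT\log(1/\delta)} + \frac{V\log(1/\delta)}{3}$ gives, with $L := \log(1/\delta)$ and $\sigma^2 := sT$,
\begin{equation}
\frac{\tau^2}{2} - L\left(\sigma^2 + \frac{V\tau}{3}\right) = \frac{V^2L^2}{18} - \frac{V^2L^2}{9} = -\frac{V^2L^2}{18} < 0,
\end{equation}
so the exponent is strictly smaller than $L$ and the right-hand side is strictly larger than $\delta$: the algebraic simplification you assert does not go through. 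Your choice of $\lambda$ only yields the weaker deviation $\sqrt{2\sigma^2 L} + \frac{2VL}{3}$, i.e.\ the constant $2/3$ rather than the stated $1/3$ in the last term. To recover the stated constant, pick $\lambda$ from the main term alone: setting $x = \sqrt{2L/\sigma^2}$ and $\lambda = \frac{x}{1 + Vx/3} \in (0, 3/V)$, one has $1-\lambda V/3 = (1+Vx/3)^{-1}$ and a direct computation gives
\begin{equation}
\lambda\tau - \frac{\lambda^2\sigma^2/2}{1-\lambda V/3}
= \frac{\sigma^2 x^2\left(1 + \frac{Vx}{6}\right) - \frac{\sigma^2 x^2}{2}}{1+\frac{Vx}{3}}
= \frac{\sigma^2 x^2}{2} = L,
\end{equation}
which closes the bound exactly (this is the standard sub-gamma inversion, cf.\ Boucheron--Lugosi--Massart, \S 2.4). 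With this one-line repair your self-contained argument is complete; alternatively, citing the theorem as the paper does is sufficient.
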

\begin{lemma}
\label{lm:concentration}
If \cref{algo:cfol} is run on bounded rewards $L_{y^t,i^t}(\theta^t) \in [0,1]$ $\forall t$ with step-size $\eta=\sqrt{\log k /(4 k C)}$, mixing coefficient $\gamma = \nicefrac{1}{2}$ and the model satisfies \Cref{asm:mistake-bound}, then for any $y\in[k]$ we obtain have the following bound with probability at least $1-\nicefrac{\delta}{k}$,
\begin{equation}
\frac 1T \sum_{t=1}^T \widehat{L}_{y}\left(\theta^{t}\right) 
\leq \frac{6C}{T}
+\frac{\sqrt{4k \log \left(\nicefrac{k}{\delta}\right)}}{\sqrt{T}}+\frac{(1+2k) \log \left(\nicefrac{k}{\delta}\right)}{3T}.
\end{equation}
\end{lemma}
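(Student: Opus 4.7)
The plan is to combine the regret bound from Lemma \ref{lm:optimal-adv} with a standard Bernstein-style concentration of the unbiased estimator $\widetilde{L}^t_y$ around its conditional mean $\widehat{L}_y(\theta^t)$. Concretely, I will write
\[
\frac{1}{T}\sum_{t=1}^T \widehat{L}_y(\theta^t) \;=\; \frac{1}{T}\sum_{t=1}^T \widetilde{L}^t_y \;+\; \frac{1}{T}\sum_{t=1}^T \bigl(\widehat{L}_y(\theta^t) - \widetilde{L}^t_y\bigr),
\]
control the first sum deterministically via regret, and the second sum with high probability via Lemma \ref{lm:bernstein}.

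First, I specialize Lemma \ref{lm:optimal-adv} to the comparator $u = e_y \in \Delta_k$. Since $\langle e_y, \widetilde{L}^t\rangle = \widetilde{L}^t_y$, this immediately yields $\sum_{t=1}^T \widetilde{L}^t_y \leq 6C$, giving the $6C/T$ term in the bound.

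Next I set $A_t := \widehat{L}_y(\theta^t) - \widetilde{L}^t_y$ and check that $\{A_t\}$ is a martingale difference sequence with respect to the natural filtration generated by the sampling history $(y^s, i^s)_{s<t}$. Indeed, $\theta^t$ and $p^t$ are measurable with respect to the past, and the unbiasedness computation from Section \ref{sec:method} shows $\mathbb{E}[\widetilde{L}^t_y \mid \text{past}] = \widehat{L}_y(\theta^t)$, so $\mathbb{E}[A_t \mid \text{past}] = 0$. To apply Lemma \ref{lm:bernstein} I need (i) a uniform bound $|A_t| \leq V$ and (ii) a conditional variance bound $\mathbb{E}[A_t^2 \mid \text{past}] \leq s$. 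Using the mixing $p^t_y \geq \gamma/k = 1/(2k)$ together with $L_{y^t,i^t}(\theta^t) \in [0,1]$, the estimator satisfies $\widetilde{L}^t_y \in \{0,\widehat{L}_y(\theta^t)/p^t_y\}$ and thus $\widetilde{L}^t_y \leq 2k$, while $\widehat{L}_y(\theta^t) \in [0,1]$, giving $|A_t| \leq 1 + 2k =: V$. For the second moment,
\[
\mathbb{E}\bigl[(\widetilde{L}^t_y)^2 \mid \text{past}\bigr] \;=\; p^t_y \cdot \frac{(\widehat{L}_y(\theta^t))^2}{(p^t_y)^2} \;=\; \frac{(\widehat{L}_y(\theta^t))^2}{p^t_y} \;\leq\; 2k,
\]
so $s := 2k$ works.

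Finally, applying Lemma \ref{lm:bernstein} with these $V$ and $s$ and failure probability $\delta/k$ (leaving room for the union bound used in the theorem), I obtain
\[
\frac{1}{T}\sum_{t=1}^T A_t \;\leq\; \frac{\sqrt{4k \log(k/\delta)}}{\sqrt{T}} + \frac{(1+2k)\log(k/\delta)}{3T}
\]
with probability at least $1 - \delta/k$. Adding this to the regret bound $\tfrac{1}{T}\sum_t \widetilde{L}^t_y \leq 6C/T$ gives exactly the stated inequality. There is no genuine technical obstacle; the main care is bookkeeping to ensure the constants $V$ and $s$ match the form $(1+2k)$ and $\sqrt{4k}$ in the conclusion, and to use the mixing-induced lower bound on $p^t_y$ at the right places.
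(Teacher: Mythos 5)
Your proof is correct and follows essentially the same route as the paper's: comparator $u=e_y$ in \Cref{lm:optimal-adv} for the $6C/T$ term, then the Bernstein-type bound of \Cref{lm:bernstein} on the martingale differences $A_t$ with $V=1+\nicefrac{k}{\gamma}=1+2k$ and $s=\nicefrac{k}{\gamma}=2k$. The only minor imprecision is that when $y=y^t$ the estimator equals $L_{y,i^t}(\theta^t)/p^t_y$ (the loss of the single sampled example) rather than $\widehat{L}_y(\theta^t)/p^t_y$, so the conditional second moment is $\mathbb{E}_{i}\left[L_{y,i}(\theta^t)^2\mid\theta^t\right]/p^t_y$ rather than $(\widehat{L}_y(\theta^t))^2/p^t_y$; since this is still bounded by $1/p^t_y\leq 2k$, all constants go through unchanged.
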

\begin{proof}
Pick any $y\in[k]$ and let $u=e_y$ in $\langle u, \widetilde{L}^{t}\rangle$.
By construction the following defines a martingale difference sequence, %
\begin{equation}
A_t = \widehat{L}_y(\theta^t) - \langle e_y, \widetilde{L}^{t}\rangle = \widehat{L}_y(\theta^t) - \langle e_y, \frac{1}{p^t_{y^t}}L_{y^t,i^t}(\theta^t)e_{y^t}\rangle.
\end{equation}
In particular note that $i^t$ is uniformly sampled.
To apply the Bernstein's type inequality we just need to bound $|A_t|$ and $\mathbb{E}\left[A_{t}^{2} \mid q^t, \theta^t\right]$.
For $|A_t|$ we can crudely bound it as,
\begin{equation}
|A_t| \leq |\widehat{L}_y(\theta^t)| + |\langle e_y, \frac{1}{p^t_{y^t}}L_{y^t,i^t}(\theta^t)e_{y^t}\rangle| \leq 1 + \frac{k}{\gamma}.
\end{equation}
To bound the variance observe that we have the following:
\begin{align*}
\mathbb{E}\left[\braket{e_y,\widetilde{L}^t}^2 \mid q^t, \theta^t\right] 
&\leq \sum_{y'=1}^k \frac{p^t_{y'}}{(p^t_{y'})^2}\mathbb{E}\left[L_{{y'},i}(\theta^t)^2\mid \theta^t\right] (e_{y})_{y'} && \text{(with $i \sim \operatorname{unif}(|\mathcal N_{y'}|)\ \forall y'$)}\\
&= \frac{1}{p^t_{y}}\mathbb{E}\left[L_{y, i}\left(\theta^{t}\right)^{2} \mid \theta^{t}\right] \\
&\leq \frac{1}{p^t_{y}} \leq \frac{k}{\gamma}. && \text{(by \Cref{eq:dist-ineq})}
\end{align*}
It follows that $\mathbb{E}\left[A_{t}^{2} \mid q^t, \theta^t\right] \leq \nicefrac{k}{\gamma}$ 

Invoking \Cref{lm:bernstein} we get the following bound with probability at least $1-\nicefrac{\delta}{k}$,
\begin{equation}
\frac 1T \sum_{t=1}^T \widehat{L}_{y}\left(\theta^{t}\right) 
\leq \frac 1T \sum_{t=1}^T \langle e_{y}, \widetilde{L}^{t}\rangle 
+\frac{\sqrt{2 \frac{k}{\gamma} \log \left(\nicefrac{k}{\delta}\right)}}{\sqrt{T}}+\frac{(1+\frac{k}{\gamma}) \log \left(\nicefrac{k}{\delta}\right)}{3T}
\end{equation}
By bounding the first term %
on the right hand side with \Cref{lm:optimal-adv} and taking $\gamma=\nicefrac{1}{2}$ we obtain,
\begin{equation}
\frac 1T \sum_{t=1}^T \widehat{L}_{y}\left(\theta^{t}\right) 
\leq \frac{6C}{T}
+\frac{\sqrt{4k \log \left(\nicefrac{k}{\delta}\right)}}{\sqrt{T}}+\frac{(1+2k) \log \left(\nicefrac{k}{\delta}\right)}{3T}
\end{equation}
This completes the proof.
\end{proof}

We are now ready to state the main theorem.
\begin{reptheorem}{thm:convergence}
If \cref{algo:cfol} is run on bounded rewards $L_{y^t,i^t}(\theta^t) \in [0,1]$ $\forall t$ with step-size $\eta=\sqrt{\log k /(4 k C)}$, mixing parameter $\gamma = \nicefrac{1}{2}$ and the model satisfies \Cref{asm:mistake-bound}, then after $T$ iterations with probability at least $1-\delta$,
\begin{equation}
\max_{y\in [k]}\frac 1n\sum_{j=1}^n\widehat{L}_{y}\left(\theta^{t_j}\right) \leq \frac{6 C}{T}+\frac{\sqrt{4 k \log (2k / \delta)}}{\sqrt{T}}+\frac{(1+2 k) \log (2k / \delta)}{3 T} 
+ \frac{\sqrt{2 \log (2k / \delta)}}{\sqrt{n}}+\frac{2 \log (2k / \delta)}{3 n},
\end{equation}
for an ensemble of size $n$ where $t_{j} \stackrel{iid}{\sim} \operatorname{unif}(T)$ for $j \in [n]$. 
\end{reptheorem}
\begin{proof}
If we fix $y$ and let $t_j \stackrel{iid}{\sim} \operatorname{unif}(T)$ %
, then the following is a martingale difference sequence,
\begin{equation}
A_j = \widehat{L}_{y}\left(\theta^{t_j}\right) - \frac{1}{T} \sum_{t=1}^{T} \widehat{L}_{y}\left(\theta^{t}\right),
\end{equation}
for which it is easy to see that $|A_j| \leq 2$ and $A^2_j \leq 1$ given boundedness of the loss.
This readily let us apply \Cref{lm:bernstein} with high probability $1-\nicefrac{\delta}{2k}$.
Combining this with the bound of \Cref{lm:concentration} with probability $1-\nicefrac{\delta}{2k}$ by using a union bound completes the proof.
\end{proof}

\newpage
\section{CVaR}
\label{app:cvar}
Conditional value at risk (CVaR) is the expected loss conditioned on being larger than the $(1-\alpha)$-quantile.
This has a distributional robust interpretation which for a discrete distribution can be written as,
\begin{equation}
\operatorname{CVaR}_{\alpha}\left(\theta, P_{0}\right):=\sup _{p \in \Delta_{m}}\left\{\sum_{i=1}^{m} p_{i} \ell\left(\theta, x_{i}\right) \text { s.t. }\|p\|_{\infty} \leq \frac{1}{\alpha m}\right\}.
\end{equation}
The optimal $p$ of the above problem places uniform mass on the tail.
Practically we can compute this best response by sorting the losses $\{\ell(\theta,x_i)\}_i$ in descending order and assigning $\frac{1}{\alpha m}$ mass per index until saturation.

\paragraph{Primal CVaR}
When $m$ is large a stochastic variant is necessary.
To obtain a stochastic subgradient for the model parameter, the naive approach is to compute a stochastic best response over a mini-batch.
This has been studied in detail in \citep{levy2020large}.

\paragraph{Dual CVaR}
An alternative formulation relies on strong duality of CVaR originally showed in \citep{rockafellar2000optimization},
\begin{equation}
\mathrm{CVaR}_{\alpha}\left(\theta, P_{0}\right) = \inf _{\lambda \in \mathbb{R}}\left\{\lambda + \frac{1}{\alpha m} \sum_{i=1}^{m}\left(\ell\left(\theta, x_{i}\right)-\lambda\right)_{+}\right\}.
\end{equation}
In practice, one approach is to jointly minimize $\lambda$ and the model parameters by computing stochastic gradients as done in \citep{curi2019adaptive}.
Alternatively, we can find a close form solution for $\lambda$ under a mini-batch (see e.g. \citep[Appendix I.1]{xu2020class}).

In comparison CFOL acts directly on the probability distribution similarly to primal CVaR.
However, instead of finding a best response on a uniformly sampled mini-batch it updates the weights iteratively and samples accordingly (see \cref{algo:cfol}).
Despite this difference, it is interesting that a direct connection can be established between the uncertainty sets of the methods.
The following section is dedicated to this.

To be pedantic it is worth pointing out a minor discrepancy when applying CVaR in the context of deep learning.
Common architectures such as ResNets incorporates batch normalization which updates the running mean and variance on the forward pass.
The implication is that the entire mini-batch is used to update these statistics despite the gradient computation only relying on the worst subset.
This makes implementation in this setting slightly more convoluted.

\subsection{Relationship with CVaR}
\label{app:rel-cvar}

Exp3 is run with a uniform mixing to enforce exploration.
This turns out to imply the necessary and sufficient condition for CVaR.
We make this precise in the following lemma:

\begin{lemma}
\label{lm:up-to-low}
Let $p$ be a uniform mixing $p = \gamma \frac{1}{m} + (1-\gamma)q$ with an arbitrary distribution $q \in \Delta_m$ and $\epsilon \in [0,1]$ such that the distribution has a lower bound on each element $p_i \geq \gamma \frac{1}{m}$.
Then an upper bound is implicitly implied on each element, $\|p\|_\infty := \max_{i=[m]}p_i \leq 1 - \frac{(m-1)\gamma}{m}$.
\end{lemma}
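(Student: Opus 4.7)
The plan is to use the fact that $p$ is a probability distribution, hence its coordinates sum to $1$, together with the per-coordinate lower bound $p_i \geq \gamma/m$ that is either assumed or immediately derivable from the mixture form. The upper bound on any single coordinate then follows by writing it as $1$ minus the sum of the remaining $m-1$ coordinates and bounding those remaining coordinates below.

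Concretely, I would first verify that $p \in \Delta_m$: since $q \in \Delta_m$ and $\frac{1}{m}\mathbbm{1}$ is the uniform distribution on $\Delta_m$, the convex combination $p = \gamma \cdot \frac{1}{m}\mathbbm{1} + (1-\gamma) q$ lies in $\Delta_m$ for $\gamma \in [0,1]$, so $\sum_{i=1}^m p_i = 1$ and each $p_i \geq 0$. Next, I would observe that because $q_i \geq 0$, we have $p_i = \gamma/m + (1-\gamma) q_i \geq \gamma/m$ for every $i$, which is precisely the lower bound stated in the hypothesis.

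Finally, fix any index $j \in [m]$. Using $\sum_{i=1}^m p_i = 1$ and the lower bound on the other coordinates,
\begin{equation}
p_j = 1 - \sum_{i \neq j} p_i \leq 1 - (m-1) \cdot \frac{\gamma}{m} = 1 - \frac{(m-1)\gamma}{m}.
\end{equation}
Taking the maximum over $j$ yields $\|p\|_\infty \leq 1 - \frac{(m-1)\gamma}{m}$, as required.

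There is really no obstacle here; the lemma is essentially a one-line consequence of the simplex constraint combined with the uniform-mixing lower bound. The only thing to be careful about is that the statement's appeal to $\epsilon$ in the hypothesis appears to be a typo (the mixing parameter is $\gamma$), and that the bound is tight exactly when all the mass of $q$ concentrates on a single coordinate, which confirms the constant $1 - (m-1)\gamma/m$ cannot be improved without further assumptions on $q$.
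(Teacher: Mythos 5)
Your proof is correct and follows exactly the same argument as the paper's: the other $m-1$ coordinates each carry at least $\gamma/m$ mass, so any single coordinate is at most $1 - (m-1)\gamma/m$. You simply spell out the simplex membership and the lower bound more explicitly than the paper's one-line version, and your observation that the $\epsilon$ in the statement is a typo for $\gamma$ is accurate.
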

\begin{proof}
Consider $p_i$ for any $i$. 
At least $(m-1)\frac{\gamma}{m}$ of the mass must be on other components so $p_i \leq 1 - \frac{(m-1)\gamma}{m}$.
The result follows.
\end{proof}

\begin{corollary}
Consider the uncertainty set of Exp3,
\begin{equation}
\mathcal{U}^{\operatorname{Exp3}}\left(P_{0}\right)=\left\{p \in \Delta_{m} \mid p_i \geq \gamma \frac{1}{m}\ \forall i\right\}.
\end{equation}
Given that the above lower bound is only introduced for practical reasons, we might as well consider an instantiation of CVaR which turns out to be a proper relaxation,
\begin{equation}
\mathcal{U}^{\operatorname{CVaR}}\left(P_{0}\right)=\left\{p \in \Delta_m \mid\|p\|_{\infty} \leq \frac{1}{\alpha m}\right\}
\end{equation}
with $\alpha = \frac{1}{(1-\gamma)m + \gamma}$.
This leaves us with the following primal formulation,
\begin{equation}
\mathrm{CVaR}_\alpha\left(\theta, P_{0}\right):=\sup _{p \in \Delta_{m}}\left\{\sum_{i=1}^{m} p_{i} \ell\left(\theta, x_{i}\right) \text { s.t. }\|p\|_{\infty} \leq \frac{1}{\alpha m}\right\}. %
\end{equation}
\end{corollary}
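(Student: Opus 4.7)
The statement is essentially a repackaging of Lemma~\ref{lm:up-to-low}, so the plan is very short: verify the set inclusion by quoting that lemma and doing an algebraic simplification of $1/(\alpha m)$, then exhibit a concrete point that certifies strictness.

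First I would take an arbitrary $p \in \mathcal{U}^{\operatorname{Exp3}}(P_0)$, meaning $p \in \Delta_m$ with $p_i \geq \gamma/m$ for all $i$. Applying Lemma~\ref{lm:up-to-low} immediately gives $\|p\|_\infty \leq 1 - (m-1)\gamma/m$. Next I would simplify the CVaR threshold under the stated choice $\alpha = 1/((1-\gamma)m + \gamma)$:
\begin{equation*}
\frac{1}{\alpha m} = \frac{(1-\gamma)m + \gamma}{m} = (1-\gamma) + \frac{\gamma}{m} = 1 - \frac{(m-1)\gamma}{m}.
\end{equation*}
Combining the two displays yields $\|p\|_\infty \leq 1/(\alpha m)$, i.e.\ $p \in \mathcal{U}^{\operatorname{CVaR}}(P_0)$. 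This establishes $\mathcal{U}^{\operatorname{Exp3}}(P_0) \subseteq \mathcal{U}^{\operatorname{CVaR}}(P_0)$.

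To show the relaxation is \emph{proper} for $\gamma \in (0,1)$, I would exhibit a $p$ in the CVaR set that violates the Exp3 lower bound. Starting from the uniform distribution $(1/m,\dots,1/m)$ (which lies in both sets), fix $\varepsilon \in (0,\gamma/m)$ and define
\begin{equation*}
p_1 = \tfrac{1}{m} - \varepsilon, \qquad p_j = \tfrac{1}{m} + \tfrac{\varepsilon}{m-1} \quad \text{for } j \geq 2.
\end{equation*}
By construction $p \in \Delta_m$, and $p_1 < \gamma/m$ once $\varepsilon$ is chosen so that $1/m - \varepsilon < \gamma/m$, which is possible for any $\gamma \in (0,1)$ by taking $\varepsilon$ close enough to $(1-\gamma)/m$ (while still respecting $\varepsilon < \gamma/m$; when $\gamma > 1/2$ one shrinks $\varepsilon$, when $\gamma \leq 1/2$ the bound $\varepsilon \in ((1-\gamma)/m, \gamma/m)$ is used — in either regime a valid $\varepsilon$ exists as long as $\gamma \neq 1/2$; for the edge case $\gamma = 1/2$ one simply drops the $1/m$ mass entirely onto $p_1 = 0$ and redistributes over the remaining coordinates). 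Then $\|p\|_\infty = 1/m + \varepsilon/(m-1)$ stays below $1-(m-1)\gamma/m = 1/(\alpha m)$ for small $\varepsilon$, so $p \in \mathcal{U}^{\operatorname{CVaR}}(P_0) \setminus \mathcal{U}^{\operatorname{Exp3}}(P_0)$.

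There is no real obstacle in this argument; the only mildly delicate point is choosing the perturbation so that both $p_1 < \gamma/m$ and $\|p\|_\infty \leq 1/(\alpha m)$ hold simultaneously, which reduces to elementary bookkeeping in $\gamma, m, \varepsilon$. The core content of the corollary is the identity $1/(\alpha m) = 1 - (m-1)\gamma/m$ combined with Lemma~\ref{lm:up-to-low}.
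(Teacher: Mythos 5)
Your first part coincides exactly with the paper's own proof: the paper also just invokes Lemma~\ref{lm:up-to-low} to get $\|p\|_\infty \leq 1 - \frac{(m-1)\gamma}{m}$, \emph{defines} this quantity to be $\frac{1}{\alpha m}$, and solves for $\alpha = \frac{1}{(1-\gamma)m+\gamma}$. Your verification runs the same algebra in the opposite direction (start from the stated $\alpha$, simplify $\frac{1}{\alpha m}$), which is equivalent. So the inclusion $\mathcal{U}^{\operatorname{Exp3}}(P_0) \subseteq \mathcal{U}^{\operatorname{CVaR}}(P_0)$ is handled correctly and matches the paper; note the paper proves nothing more than this, relegating the claim that the CVaR set is strictly larger to an informal remark after the proof.

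Your added strictness argument, however, is where the problems lie. First, the cap $\varepsilon < \gamma/m$ is spurious: the only constraints you actually need are $\varepsilon > (1-\gamma)/m$ (to violate the Exp3 lower bound), $\varepsilon \leq 1/m$ (nonnegativity of $p_1$), and the sup-norm check. Having imposed it, your bookkeeping goes wrong: the interval $\bigl((1-\gamma)/m,\ \gamma/m\bigr)$ is nonempty iff $\gamma > 1/2$, so it is empty for \emph{all} $\gamma \leq 1/2$, not only at $\gamma = 1/2$ as you claim; under your own constraints no valid $\varepsilon$ exists for any $\gamma < 1/2$. Second, and more substantively, the properness claim itself is false for $m=2$: there $\|p\|_\infty \leq \frac{1}{\alpha m} = 1 - \frac{\gamma}{2}$ is equivalent to $\min_i p_i \geq \frac{\gamma}{2} = \frac{\gamma}{m}$, so the two uncertainty sets coincide and the relaxation is not proper. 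For $m \geq 3$ the repair is easy and your $\gamma=1/2$ trick is in fact the general one: the witness $p = \bigl(0, \tfrac{1}{m-1}, \dots, \tfrac{1}{m-1}\bigr)$ lies in $\mathcal{U}^{\operatorname{CVaR}}$ whenever $\tfrac{1}{m-1} \leq 1 - \tfrac{(m-1)\gamma}{m}$, i.e.\ for all $\gamma \leq 1 - \tfrac{1}{(m-1)^2}$, and for the remaining $\gamma$ close to $1$ one takes $\varepsilon$ slightly above $(1-\gamma)/m$ with no upper cap, which passes the sup-norm check since $\frac{(m-1)(1-\gamma)}{m} > \frac{1-\gamma}{m(m-1)}$ strictly when $m \geq 3$.
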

\begin{proof}
From \Cref{lm:up-to-low} and since CVaR requires $\|p\|_\infty \leq \frac{1}{\alpha m}$ we have,
\begin{equation}
\|p\|_\infty \leq 1 - \frac{(m-1)\gamma}{m} =: \frac{1}{\alpha m}
\end{equation}
By simple algebra we have,
\begin{equation}
\alpha = \frac{1}{(1-\gamma)m+\gamma},
\end{equation}
which completes the proof.
\end{proof}

So if the starting point of the uncertainty set is the simplex and the uniform mixing in Exp3 is therefore only for tractability reasons, then we might as well minimize the CVaR objective instead.
This could even potentially lead to a more robust solution as the uncertainty set is larger (since the upper bound in $\mathcal{U}^{\operatorname{CVaR}}$ does not imply the lower bound in $\mathcal{U}^{\operatorname{Exp3}}$).

There are two things to keep in mind though.
First, $\alpha$ should not be too small since the optimization problem gets harder.
It is informative to consider the case where $\gamma=1/2$ such that $\alpha = \frac{2}{m+1}$.
From this it becomes clear that the recasting as CVaR only works for small $m$.
Secondly, despite the uncertainty sets being related, the training dynamics, and thus the obtained solution, can be drastically different, as we also observe experimentally in \Cref{sec:experiments}. For instance CVaR is known for having high variance \citep{curi2019adaptive} while the uniform mixing in CFOL prevents this. It is worth noting that despite this uniform mixing we are still able to show convergence for CFOL in terms of the worst class in \Cref{sec:theory}. %

In \Cref{table:methods} we provide an overview of the different methods induced by the choice of uncertainty set.

\begin{table}[]
\caption{Summary of methods where $N$ is the number of samples and $k$ is the number of classes.
The discrete distribution $p$ either governs the distribution over all $N$ samples or over the $k$ classes depending on the algorithm (see columns).}
\label{table:methods}
\begin{adjustbox}{center}
\centering
\begin{tabular}{l|ll}
\textbf{Uncertainty set} & \textbf{Over data point ($m=N$)} & \textbf{Over class labels ($m=k$)} \\
\hline
$\mathcal{U}^{\operatorname{Exp3}}\left(P_{0}\right)=\left\{p \in \Delta_{m} \mid p_i \geq \varepsilon \frac{1}{m}\ \forall i\right\}$ & FOL \citep{shalev2016minimizing} & CFOL (ours) \\
$\mathcal{U}^{\operatorname{CVaR}}\left(P_{0}\right)=\left\{p \in \Delta_m \mid\|p\|_{\infty} \leq \frac{1}{\alpha m}\right\}$ & CVaR \citep{levy2020large} & LCVaR \citep{xu2020class}
\end{tabular}
\end{adjustbox}
\end{table}

\newpage
\section{Experiments}
\label{app:experiments}
\begin{figure}[h]
\centering
\begin{subfigure}{0.45\textwidth}
\caption{$\|\delta\|_\infty = 1/255$}
\includegraphics[width=\linewidth]{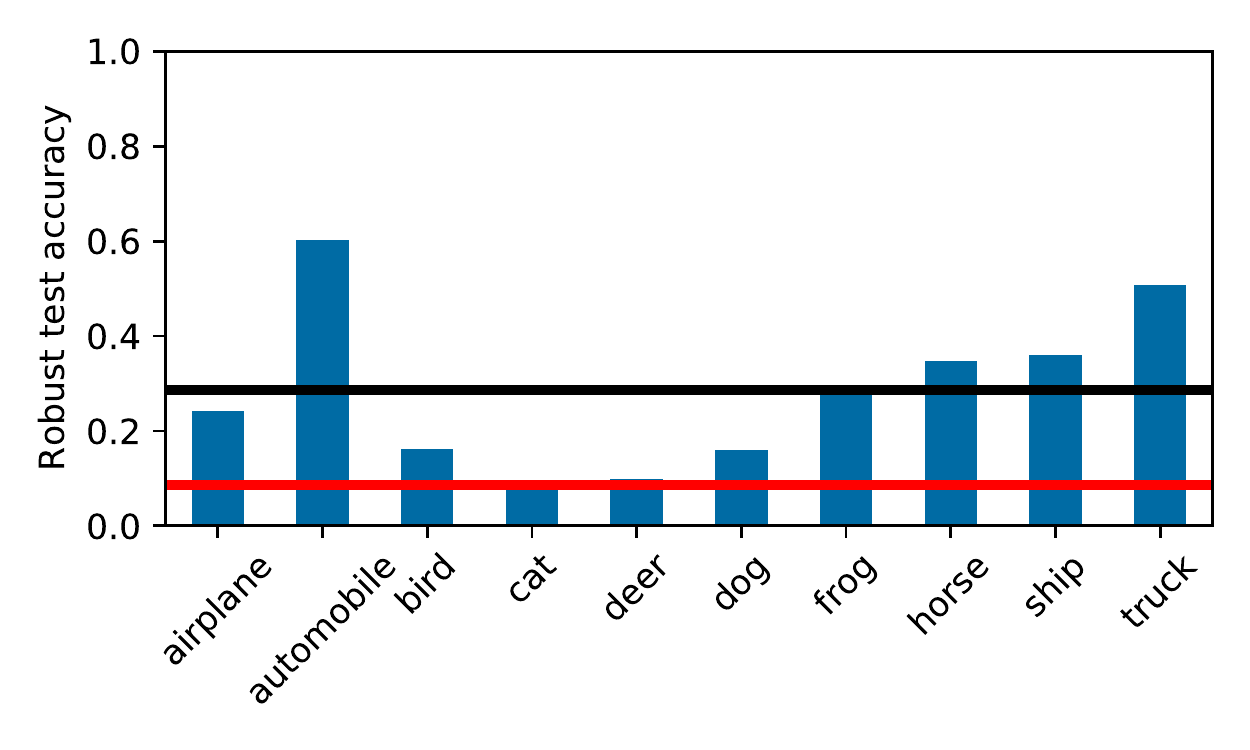}
\end{subfigure}%
\begin{subfigure}{0.45\textwidth}
\caption{$\|\delta\|_\infty = 2/255$}
\includegraphics[width=\linewidth]{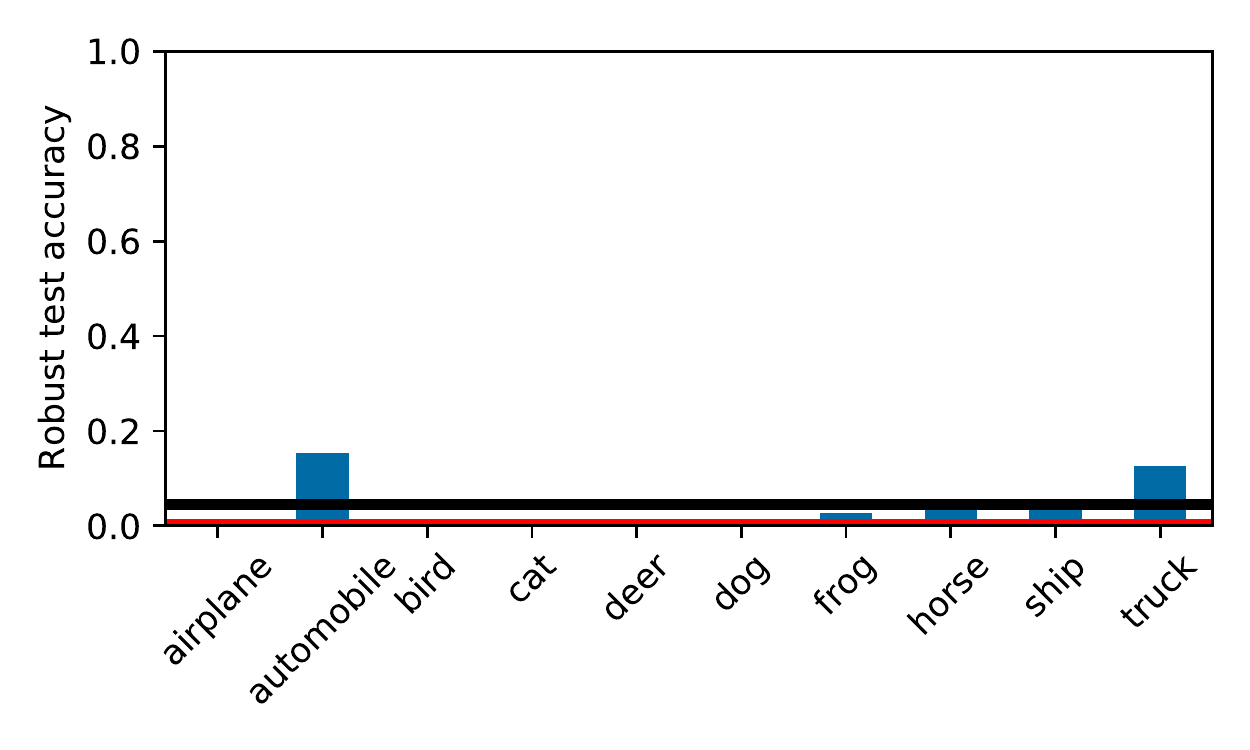}
\end{subfigure}
\caption{
\rebuttal{Robust test accuracy under different norm-ball sizes after \emph{clean} training on CIFAR10.
The non-uniform distribution over class accuracies, even after clean training, indicates that the inferior performance on some classes is not a consequence of adversarial training.
Rather, the inhomogeneity after perturbation seems to be an inherent feature of the dataset.}}
\label{fig:clean-on-robust}
\end{figure}

\begin{figure}[H]
\centering
\includegraphics[width=0.45\linewidth]{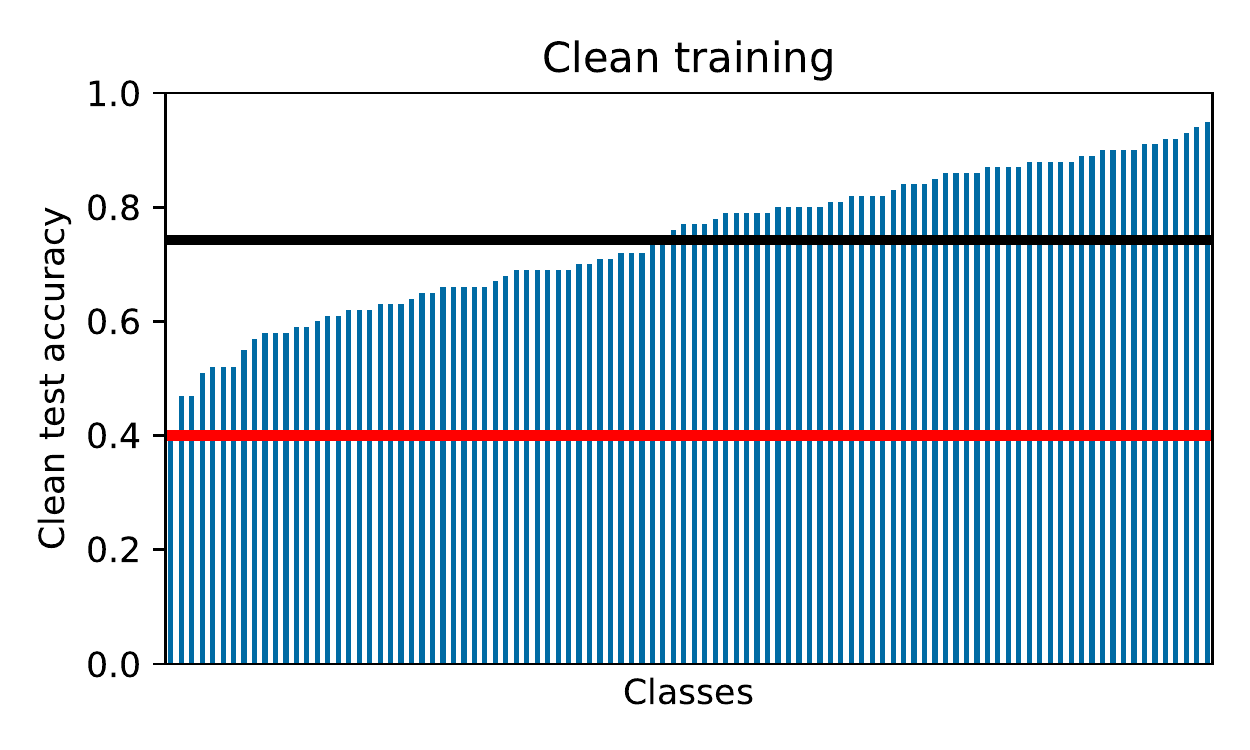}
\includegraphics[width=0.45\linewidth]{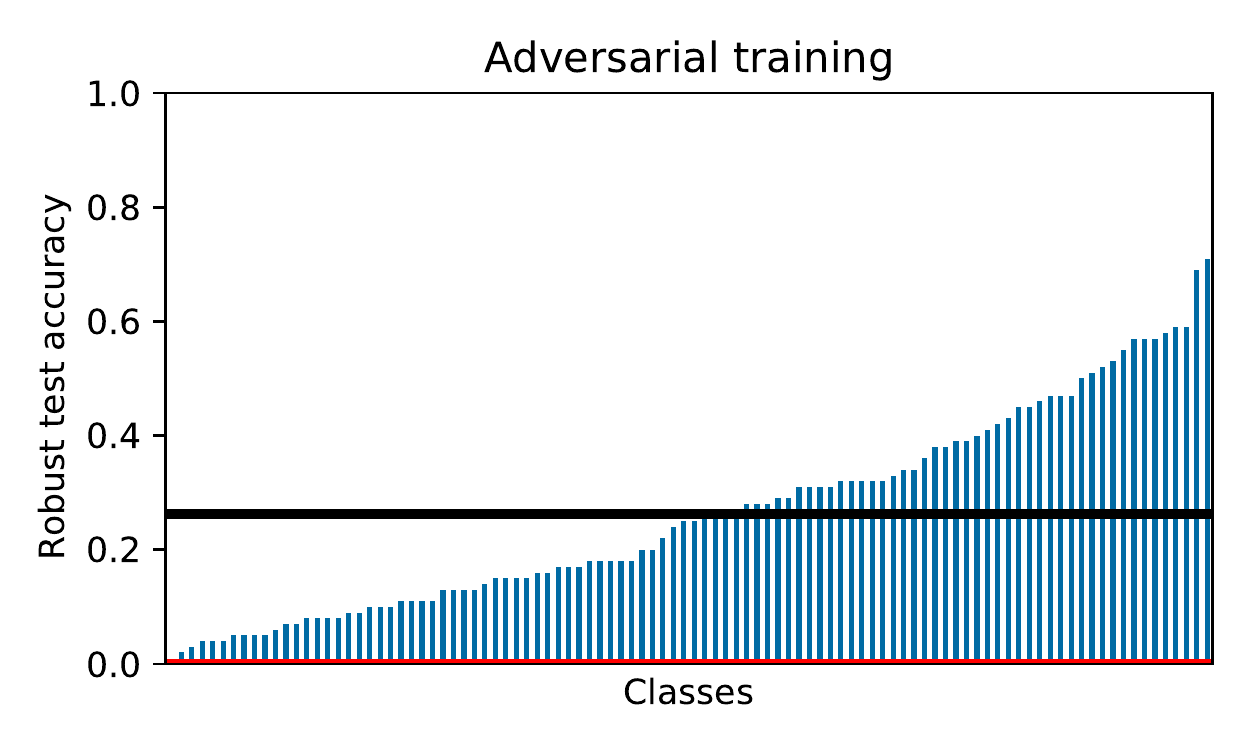}
\includegraphics[width=0.45\linewidth]{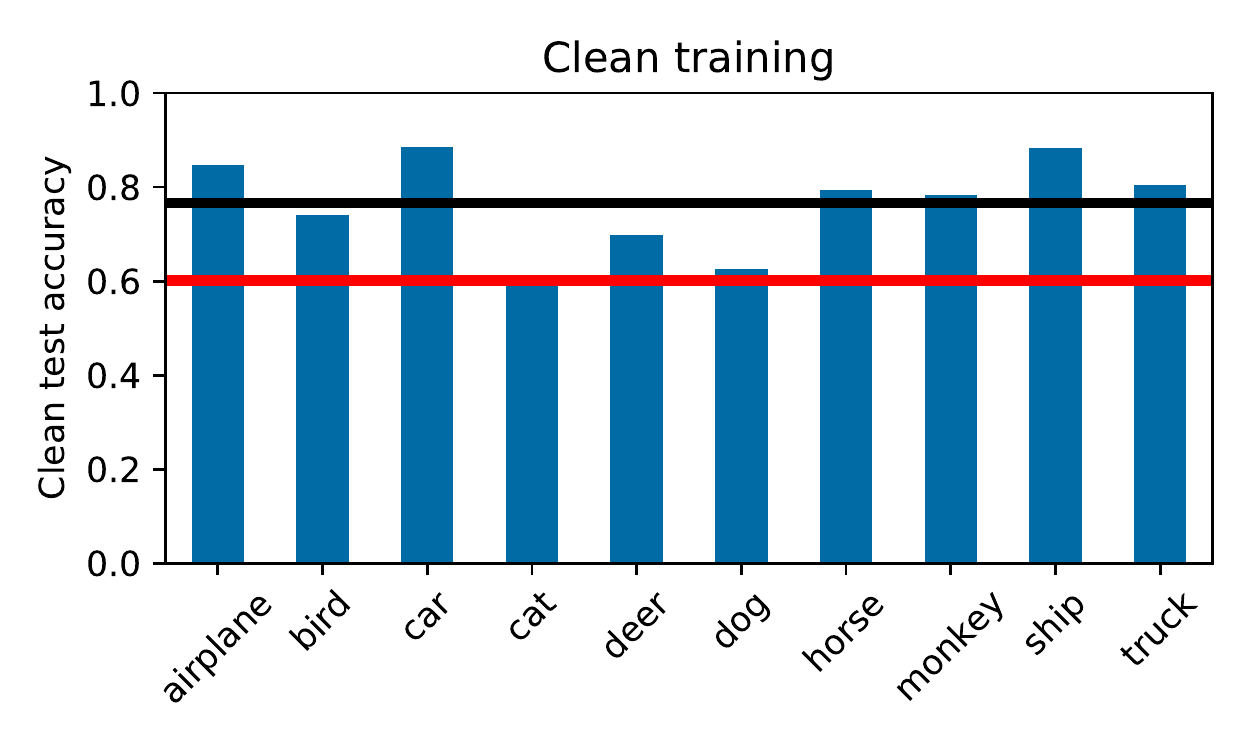}
\includegraphics[width=0.45\linewidth]{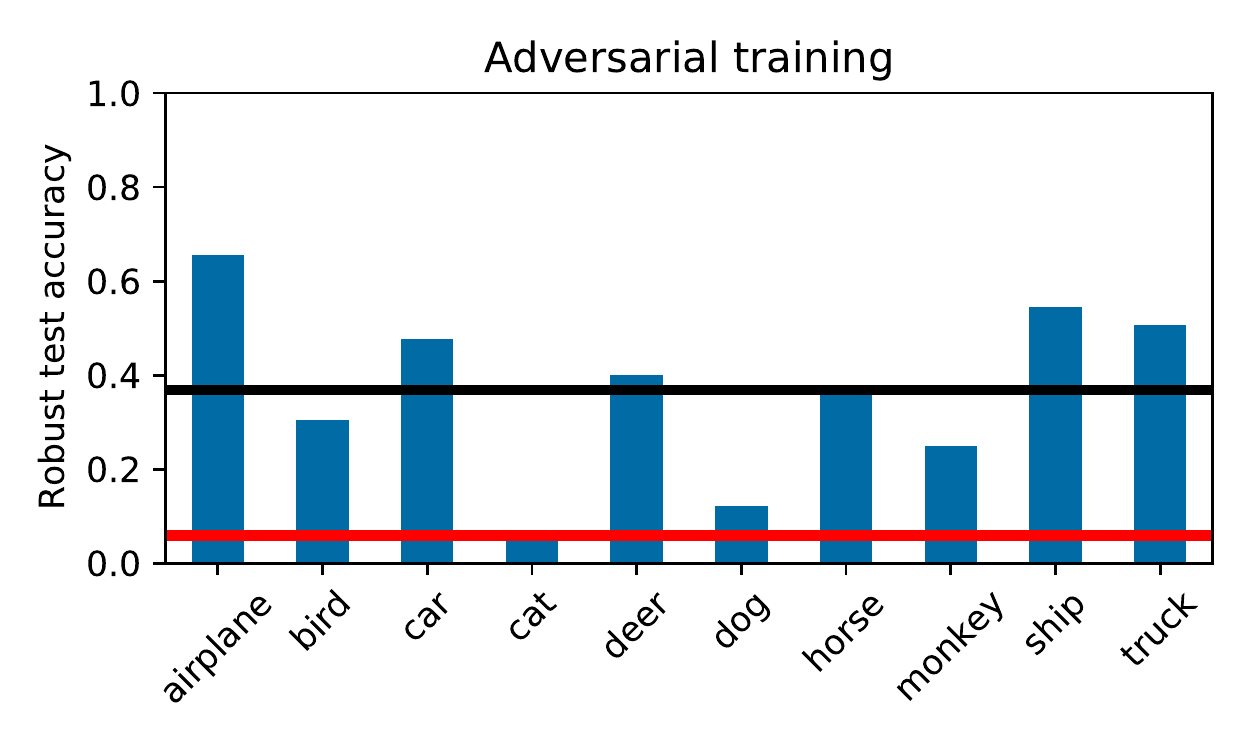}
\caption{Clean training and adversarial training on CIFAR100 (top) and STL10 (bottom) using ERM-AT with PGD-7 attacks at train time and PGD-20 attacks at test time for the robust test accuracy.
The CIFAR100 classes are sorted for convenience.
Notice that CIFAR100 has a class with zero robust accuracy with adversarial training.}
\label{fig:clean-vs-adv-precision}
\end{figure}

\begin{figure}[h]
\centering
\includegraphics[width=0.45\linewidth]{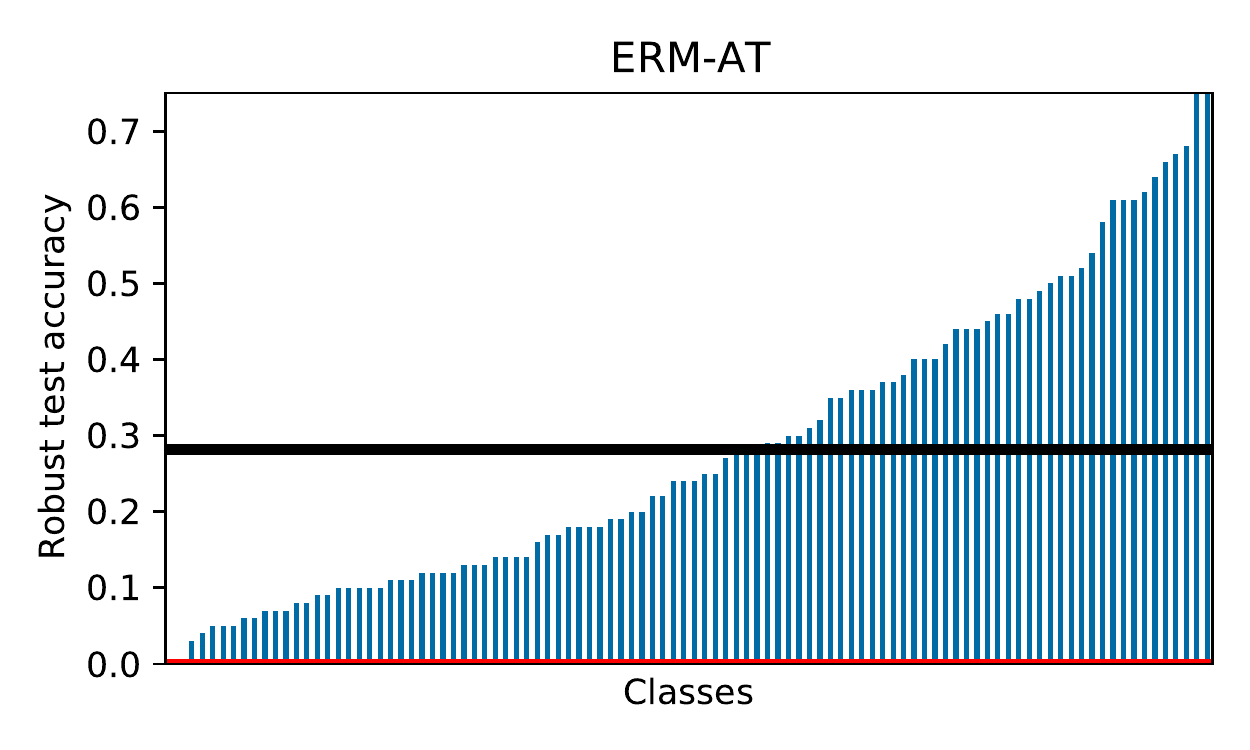}
\includegraphics[width=0.45\linewidth]{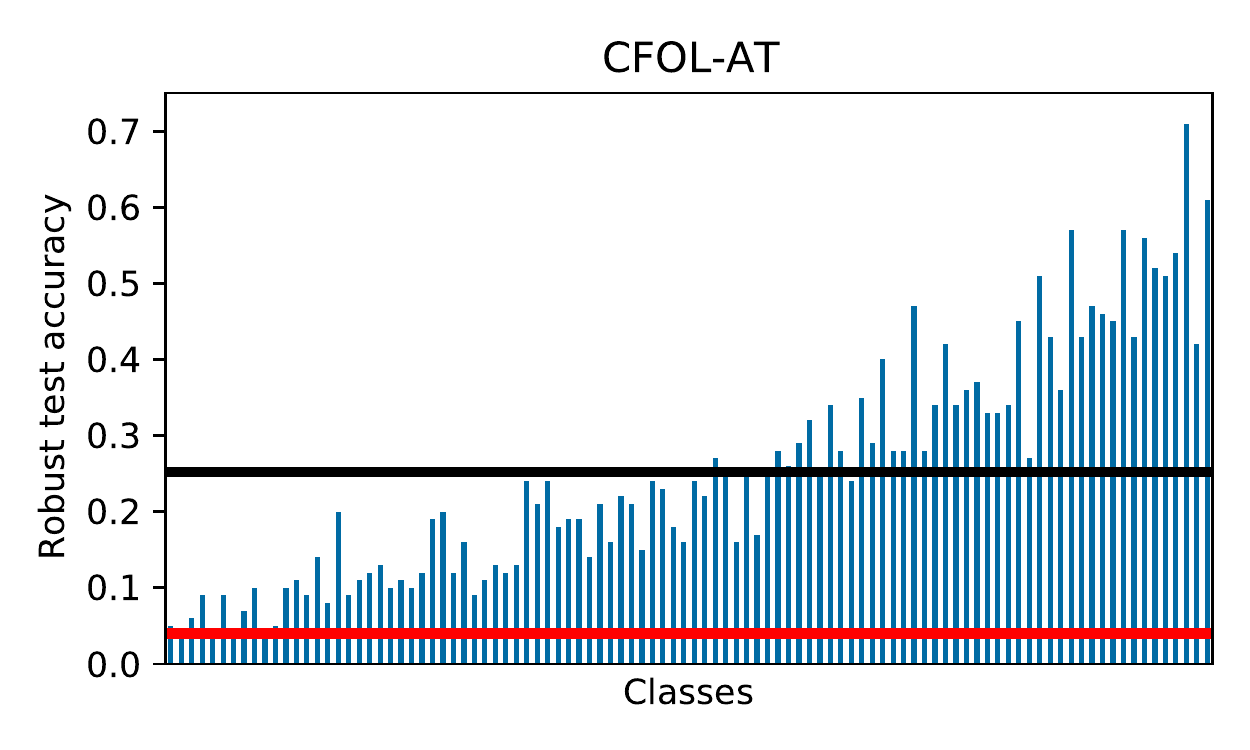}
\includegraphics[width=0.45\linewidth]{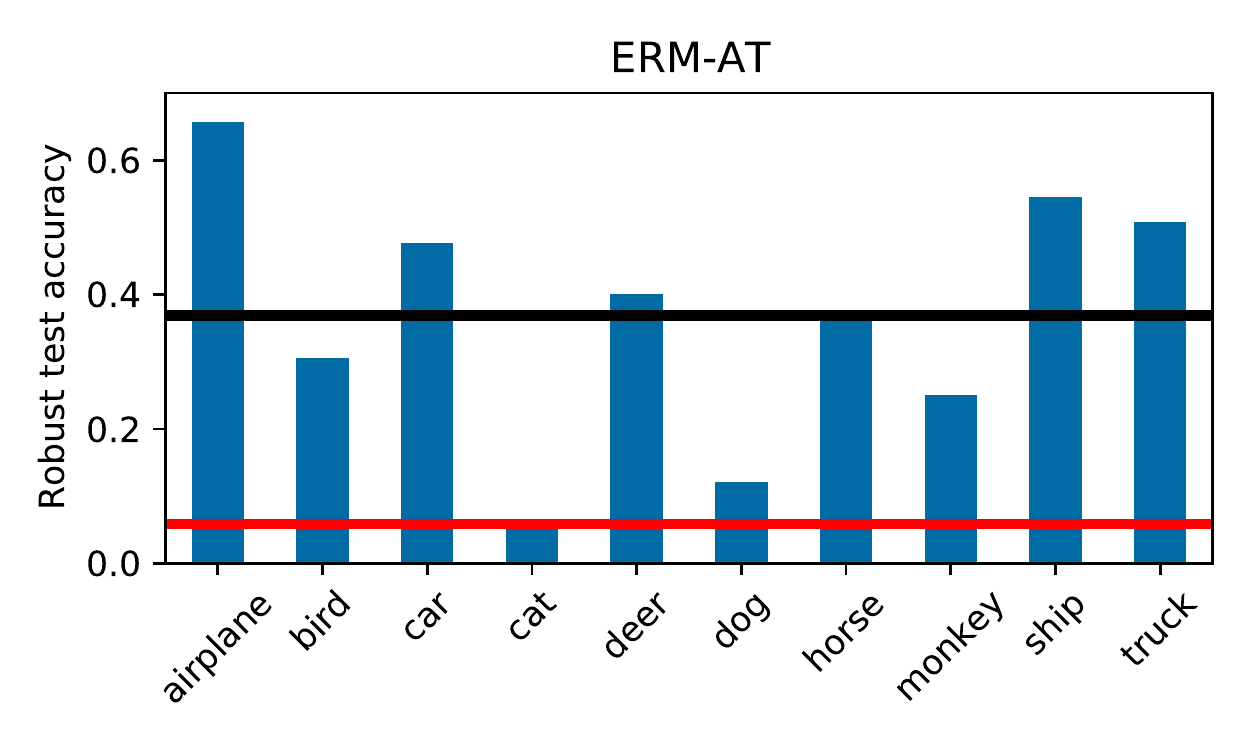}
\includegraphics[width=0.45\linewidth]{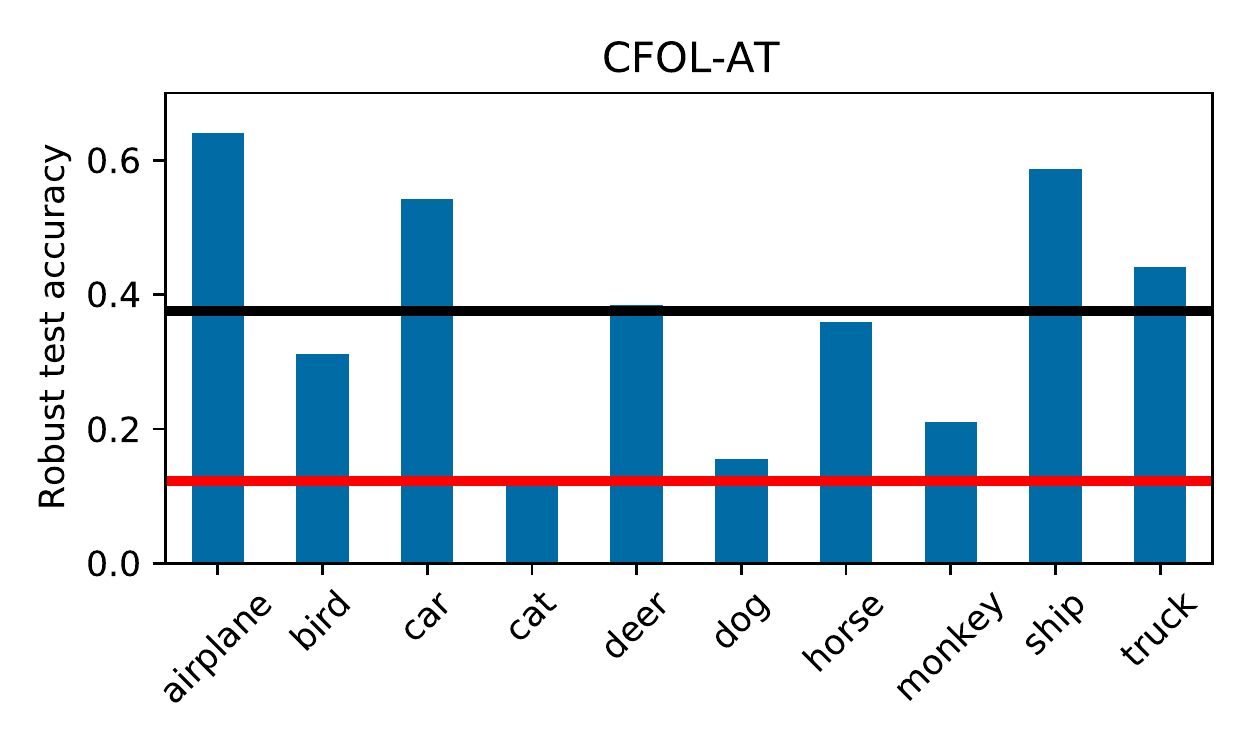}
\caption{Robust class accuracy for CIFAR100 and STL10 respectively.
Vertical black error bars indicate one standard deviation.
The red and black horizontal line indicates the minimum and average respectively.
The classes on both ERM-AT and CFOL-AT are ordered according to the accuracy on ERM-AT to make comparison easier.}
\label{fig:class-acc-remaining-datasets}
\end{figure}

\begin{table}[H]
\centering
\caption{For fair comparison we also consider \emph{early stopping based on the worst class accuracy} on the hold-out set.
As can be observed the results for CFOL-AT do not differ significantly from early stopping using the average robust accuracy, so standard training setups do not have to be modified further.
\looseness=-1
}
\label{tab:earlystop-worst-class}
\begin{tabular}{lllllll}
\toprule
      &                                     &             &     \textbf{ERM-AT} &    \textbf{CFOL-AT} &   \textbf{LCVaR-AT} &     \textbf{FOL-AT} \\
\midrule
\multirow{6}{*}{CIFAR10} & \multirow{3}{*}{$\mathrm{{acc}}_{{\mathrm{{clean}}}}$} & Average &              0.8348 &              0.8336 &              0.8298 &  {\bfseries 0.8370} \\
      &                                     & 20\% tail &              0.6920 &  {\bfseries 0.7460} &              0.6640 &              0.7045 \\
      &                                     & Worst class &              0.6710 &  {\bfseries 0.7390} &              0.6310 &              0.7000 \\
\cline{2-7}
      & \multirow{3}{*}{$\mathrm{{acc}}_{{\mathrm{{rob}}}}$} & Average &              0.4907 &              0.4939 &  {\bfseries 0.5125} &              0.4982 \\
      &                                     & 20\% tail &              0.2895 &  {\bfseries 0.3335} &              0.2480 &              0.2840 \\
      &                                     & Worst class &              0.2860 &  {\bfseries 0.3260} &              0.2450 &              0.2780 \\
\cline{1-7}
\cline{2-7}
\multirow{6}{*}{CIFAR100} & \multirow{3}{*}{$\mathrm{{acc}}_{{\mathrm{{clean}}}}$} & Average &  {\bfseries 0.5824} &              0.5537 &              0.5800 &              0.5785 \\
      &                                     & 20\% tail &              0.3115 &  {\bfseries 0.3550} &              0.3065 &              0.3105 \\
      &                                     & Worst class &              0.1500 &  {\bfseries 0.2200} &              0.1500 &              0.1700 \\
\cline{2-7}
      & \multirow{3}{*}{$\mathrm{{acc}}_{{\mathrm{{rob}}}}$} & Average &              0.2622 &              0.2483 &  {\bfseries 0.2629} &              0.2552 \\
      &                                     & 20\% tail &              0.0540 &  {\bfseries 0.0790} &              0.0605 &              0.0595 \\
      &                                     & Worst class &              0.0200 &  {\bfseries 0.0400} &              0.0100 &              0.0200 \\
\bottomrule
\end{tabular}

\end{table}

\begin{table}[H]
\centering
\caption{
  \rebuttal{
  Model performance on CIFAR10 \rbl{(left) and Imagenette (right)} under
  AutoAttack \citep{croce2020reliable}.
  The models still uses 7 steps of PGD at training time with a $\ell_\infty$-constraint of $\nicefrac{8}{255}$.
  Only at test time is the attack exchanged with AutoAttack under the same constraint.
  CFOL-AT is robust to AutoAttack in the sense that the worst class performance is still improved.
  However, as expected, the performance is worse for both methods in comparison with 20-step PGD based attacks.}
  }
  \label{tab:autoattack}
\begin{tabular}{llll}
\toprule
                                    &             &     \textbf{ERM-AT} &    \textbf{CFOL-AT} \\
\midrule
\multirow{3}{*}{$\mathrm{{acc}}_{{\mathrm{{clean}}}}$} & Average &              0.8244 &  {\bfseries 0.8342} \\
                                    & 20\% tail &              0.6590 &  {\bfseries 0.7510} \\
                                    & Worst class &              0.6330 &  {\bfseries 0.7390} \\
\cline{1-4}
\multirow{3}{*}{$\mathrm{{acc}}_{{\mathrm{{rob}}}}$} & Average &  {\bfseries 0.4635} &              0.4440 \\
                                    & 20\% tail &              0.1465 &  {\bfseries 0.2215} \\
                                    & Worst class &              0.1260 &  {\bfseries 0.2070} \\
\bottomrule
\end{tabular}

\rbl{\begin{tabular}{llll}
\toprule
                                    &             &     \textbf{ERM-AT} &    \textbf{CFOL-AT} \\
\midrule
\multirow{3}{*}{$\mathrm{{acc}}_{{\mathrm{{clean}}}}$} & Average &              0.8638 &  {\bfseries 0.8650} \\
                                    & 20\% tail &              0.7687 &  {\bfseries 0.7890} \\
                                    & Worst class &              0.7150 &  {\bfseries 0.7709} \\
\cline{1-4}
\multirow{3}{*}{$\mathrm{{acc}}_{{\mathrm{{rob}}}}$} & Average &  {\bfseries 0.5911} &              0.5838 \\
                                    & 20\% tail &              0.3576 &  {\bfseries 0.4087} \\
                                    & Worst class &              0.2254 &  {\bfseries 0.3187} \\
\bottomrule
\end{tabular}
}
\end{table}

\begin{table}[H]
\centering
\caption{
\rebuttal{Comparison between different sizes of $\ell_\infty$-ball attacks on CIFAR10.
The same constraint is used at both training and test time.
When the attack size is increased beyond the usual $\nicefrac{8}{255}$ constraint we still observe that CFOL-AT increases the robust accuracy for the weakest classes while taking a minor drop in the average robust accuracy.
Interestingly, the gap between ERM-AT and CFOL-AT seems to enlarge.
See \Cref{app:setup} for more detail on the attack hyperparameters.}}
\label{tab:ballsize}
\begin{tabular}{lllll}
\toprule
                                &                                     &             &     \textbf{ERM-AT} &    \textbf{CFOL-AT} \\
\midrule
\multirow{6}{*}{$\|\delta\|_\infty \leq 8/255$} & \multirow{3}{*}{$\mathrm{{acc}}_{{\mathrm{{clean}}}}$} & Average &              0.8244 &  {\bfseries 0.8308} \\
                                &                                     & 20\% tail &              0.6590 &  {\bfseries 0.7120} \\
                                &                                     & Worst class &              0.6330 &  {\bfseries 0.6830} \\
\cline{2-5}
                                & \multirow{3}{*}{$\mathrm{{acc}}_{{\mathrm{{rob}}}}$} & Average &  {\bfseries 0.5138} &              0.5014 \\
                                &                                     & 20\% tail &              0.2400 &  {\bfseries 0.3300} \\
                                &                                     & Worst class &              0.2350 &  {\bfseries 0.3200} \\
\cline{1-5}
\cline{2-5}
\multirow{6}{*}{$\|\delta\|_\infty \leq 12/255$} & \multirow{3}{*}{$\mathrm{{acc}}_{{\mathrm{{clean}}}}$} & Average &              0.7507 &  {\bfseries 0.7594} \\
                                &                                     & 20\% tail &              0.4565 &  {\bfseries 0.6060} \\
                                &                                     & Worst class &              0.3850 &  {\bfseries 0.5730} \\
\cline{2-5}
                                & \multirow{3}{*}{$\mathrm{{acc}}_{{\mathrm{{rob}}}}$} & Average &  {\bfseries 0.4054} &              0.3873 \\
                                &                                     & 20\% tail &              0.1180 &  {\bfseries 0.2095} \\
                                &                                     & Worst class &              0.0930 &  {\bfseries 0.1700} \\
\bottomrule
\end{tabular}

\end{table}

\begin{table}[H]
\centering
\caption{
  We test the reweighted variant of CFOL-AT (described in \Cref{sec:reweight}) on CIFAR10 and observe similar results as for CFOL-AT.
  The experimental setup is described in \Cref{sec:experiments}. %
  }
\label{tab:reweight}
\begin{adjustbox}{center}
\begin{tabular}{lllllll}
\toprule
{} & \multicolumn{3}{c}{$\mathrm{{acc}}_{{\mathrm{{clean}}}}$} & \multicolumn{3}{c}{$\mathrm{{acc}}_{{\mathrm{{rob}}}}$} \\
{} &                               Average & 20\% tail & Worst class &                             Average & 20\% tail & Worst class \\
\midrule
CFOL-AT (reweighted) &                                0.8286 &    0.7465 &      0.7370 &                              0.4960 &    0.3290 &      0.3100 \\
\bottomrule
\end{tabular}

\end{adjustbox}
\end{table}

\begin{table}[H]
\centering
\caption{
  Mean and standard deviation computed over 3 independent executions using different random seeds for both ERM-AT and CFOL-AT on CIFAR10.
  }
\label{tab:variance}
\begin{tabular}{llll}
\toprule
                                    &             &                  \textbf{ERM-AT} &                 \textbf{CFOL-AT} \\
\midrule
\multirow{3}{*}{$\mathrm{{acc}}_{{\mathrm{{clean}}}}$} & Average &  {\bfseries 0.8324} $\pm$ 0.0069 &              0.8308 $\pm$ 0.0054 \\
                                    & 20\% tail &              0.6467 $\pm$ 0.0273 &  {\bfseries 0.7192} $\pm$ 0.0436 \\
                                    & Worst class &              0.5747 $\pm$ 0.0506 &  {\bfseries 0.6843} $\pm$ 0.0121 \\
\cline{1-4}
\multirow{3}{*}{$\mathrm{{acc}}_{{\mathrm{{rob}}}}$} & Average &  {\bfseries 0.5158} $\pm$ 0.0033 &              0.5012 $\pm$ 0.0008 \\
                                    & 20\% tail &              0.2540 $\pm$ 0.0428 &  {\bfseries 0.3393} $\pm$ 0.0398 \\
                                    & Worst class &              0.2067 $\pm$ 0.0254 &  {\bfseries 0.3083} $\pm$ 0.0515 \\
\bottomrule
\end{tabular}

\includegraphics[width=0.49\linewidth]{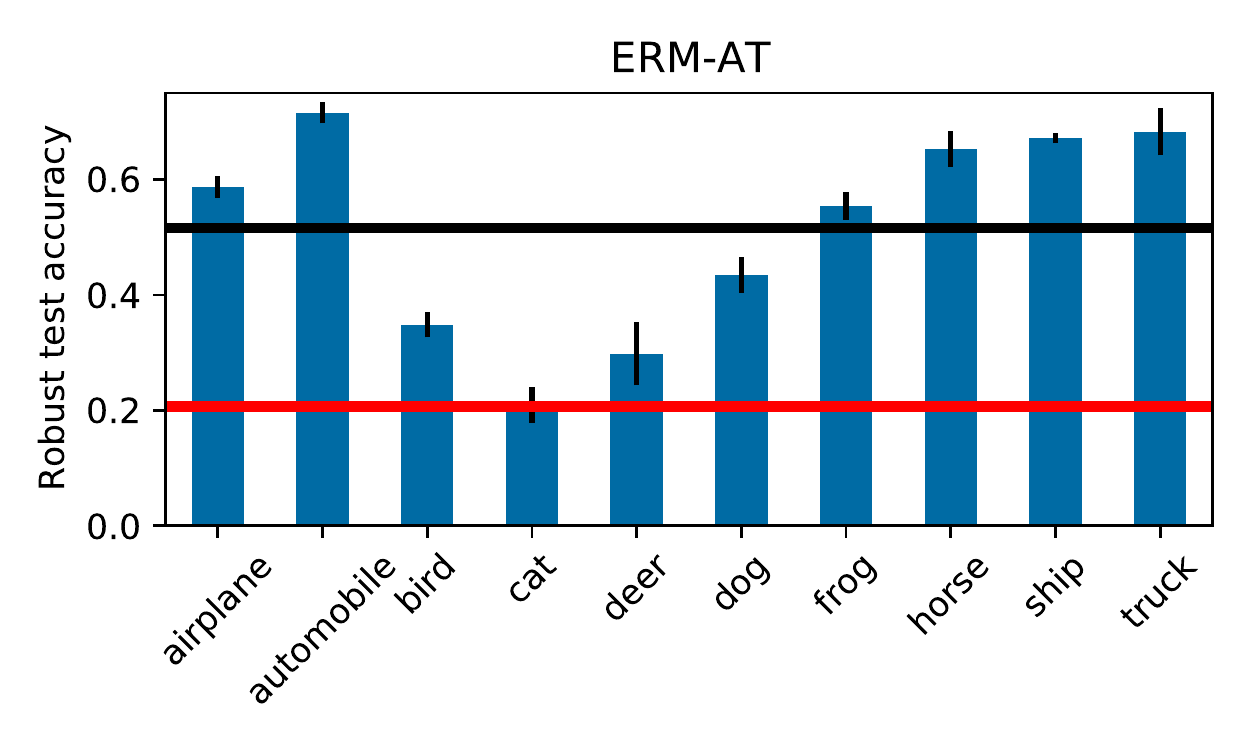}
\includegraphics[width=0.49\linewidth]{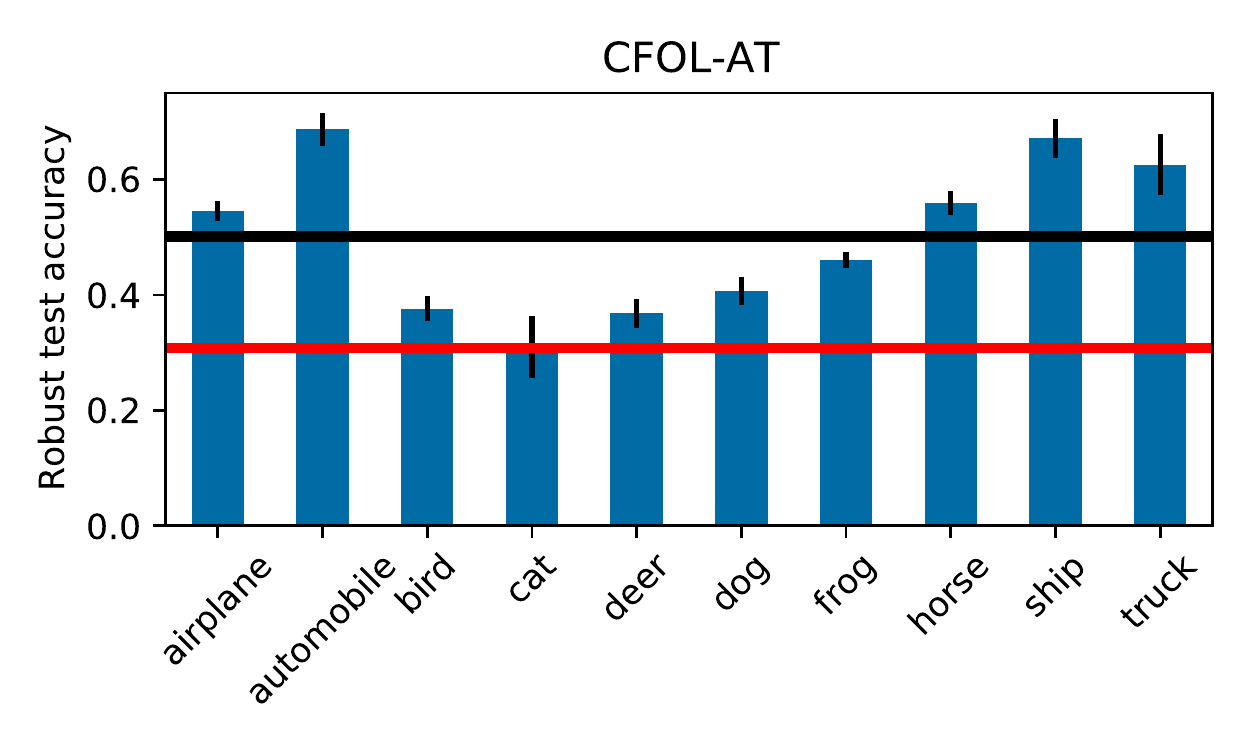}
\end{table}

\begin{table}[H]
\centering
\caption{{A naive baseline improves the 20\%-tail (marginally) over ERM-AT but is not able to improve the worst class.
The baseline first runs ERM-AT to obtain the robust class accuracies on the test set. 
The negative accurcies are pushed through a softmax to obtain a distribution $q$ and ERM-AT is rerun with classes sampled from $p=\gamma \frac{1}{k} + (1-\gamma) q$.}}
\label{tab:baseline}
{\begin{tabular}{llll}
\toprule
                                    &             & \textbf{$\gamma=0.5$} & \textbf{$\gamma=0.9$} \\
\midrule
\multirow{3}{*}{$\mathrm{{acc}}_{{\mathrm{{clean}}}}$} & Average &    {\bfseries 0.8384} &                0.8277 \\
                                    & 20\% tail &    {\bfseries 0.6810} &                0.6470 \\
                                    & Worst class &                0.6340 &    {\bfseries 0.6430} \\
\cline{1-4}
\multirow{3}{*}{$\mathrm{{acc}}_{{\mathrm{{rob}}}}$} & Average &                0.5070 &    {\bfseries 0.5094} \\
                                    & 20\% tail &    {\bfseries 0.2760} &                0.2530 \\
                                    & Worst class &                0.2180 &    {\bfseries 0.2330} \\
\bottomrule
\end{tabular}
}
\end{table}

\begin{table}[H]
\centering
\caption{{We investigate the effect of using a diffent architecture (VGG-16 \citep{simonyan2014very}) on CIFAR10. The worst classes for CFOL-AT remains improved over ERM-AT. Unsurprisingly the older network performs worse for both methods when compared with their ResNet-18 counterpart.}}
\label{tab:vgg}
{\begin{tabular}{llll}
\toprule
                                    &             &     \textbf{ERM-AT} &    \textbf{CFOL-AT} \\
\midrule
\multirow{3}{*}{$\mathrm{{acc}}_{{\mathrm{{clean}}}}$} & Average &              0.7805 &  {\bfseries 0.7904} \\
                                    & 20\% tail &              0.5385 &  {\bfseries 0.6550} \\
                                    & Worst class &              0.5020 &  {\bfseries 0.6220} \\
\cline{1-4}
\multirow{3}{*}{$\mathrm{{acc}}_{{\mathrm{{rob}}}}$} & Average &  {\bfseries 0.4799} &              0.4606 \\
                                    & 20\% tail &              0.2165 &  {\bfseries 0.3075} \\
                                    & Worst class &              0.1940 &  {\bfseries 0.3070} \\
\bottomrule
\end{tabular}
}
\end{table}

\begin{table}[H]
\centering
\caption{{Experiments on Tiny ImageNet \citep{ILSVRC15} which consists of 100,000 images across 200 classes. CFOL-AT improves the average accuracy for the 123 worst classes without optimizing the hyperparameters (the hyperparameters remains the same as for CIFAR10, CIFAR100 and STL10).}}
\label{tab:tinyimagenet}
\raisebox{-.55\height}{\includegraphics[width=0.45\textwidth]{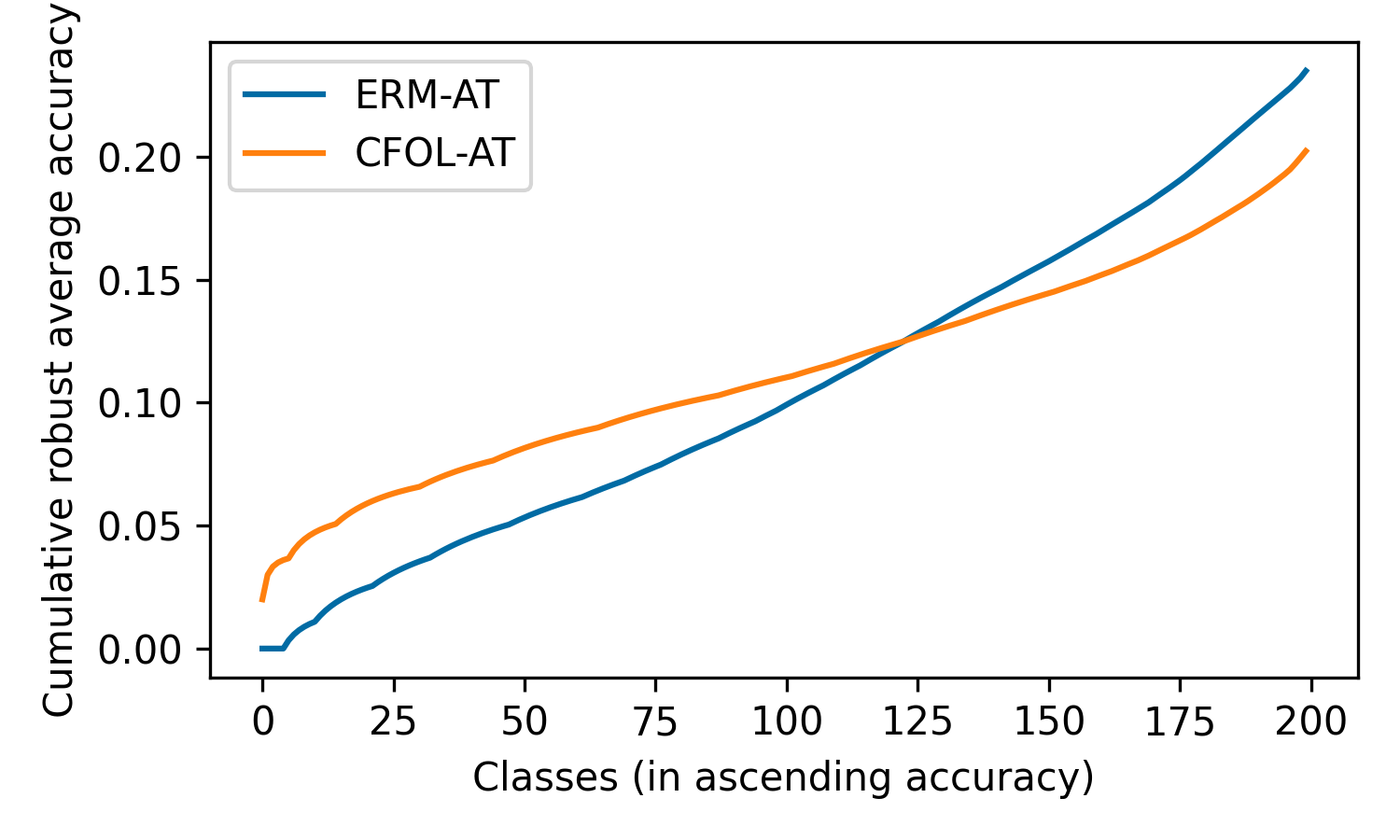}}\ \ %
{\begin{tabular}{llll}
\toprule
                                    &             &     \textbf{ERM-AT} &    \textbf{CFOL-AT} \\
\midrule
\multirow{3}{*}{$\mathrm{{acc}}_{{\mathrm{{clean}}}}$} & Average &  {\bfseries 0.4784} &              0.4606 \\
                                    & 20\% tail &              0.1725 &  {\bfseries 0.2685} \\
                                    & Worst class &              0.0200 &  {\bfseries 0.1600} \\
\cline{1-4}
\multirow{3}{*}{$\mathrm{{acc}}_{{\mathrm{{rob}}}}$} & Average &  {\bfseries 0.2349} &              0.2023 \\
                                    & 20\% tail &              0.0445 &  {\bfseries 0.0735} \\
                                    & Worst class &              0.0000 &  {\bfseries 0.0200} \\
\bottomrule
\end{tabular}
}
\end{table}

\begin{table}[H]
\centering
\caption{{Imbalanced CIFAR10 with imbalance factor 10 such that the majority class and the minority class has 5000 and 500 training samples respectively. We early stop based on the uniformly distributed test set.
Three variants of CFOL are considered.
Both the mixing distribution and initialization of the adversary $q^0$ can be either uniform over classes (U) or according to the empirical training distribution (E).
For instance if $q^0$ follows the empirical distribution (E) and the mixing distribution is uniform (U), then we suffix CFOL-AT with "EU".
We observe that CFOL-AT EE in particular improves the worst class accuracies, while the average accuracy incurs an even smaller drop than under uniform CIFAR10 (\Cref{tab:results-cifar10}).
It is interesting to understand the seeming tradeoff between clean accuracy, average robust accuracy and robust worst class accuracy cause by different instantiations of CFOL.}}
\label{tab:imbalance}
{\begin{tabular}{llllll}
\toprule
                                    &             & \textbf{ERM-AT} & \textbf{CFOL-AT UU} & \textbf{CFOL-AT EU} & \textbf{CFOL-AT EE} \\
\midrule
\multirow{3}{*}{$\mathrm{{acc}}_{{\mathrm{{clean}}}}$} & Average &          0.7300 &  {\bfseries 0.7772} &              0.7708 &              0.7586 \\
                                    & 20\% tail &          0.5770 &  {\bfseries 0.6780} &              0.6450 &              0.6420 \\
                                    & Worst class &          0.5410 &  {\bfseries 0.6730} &              0.6360 &              0.6000 \\
\cline{1-6}
\multirow{3}{*}{$\mathrm{{acc}}_{{\mathrm{{rob}}}}$} & Average &          0.4065 &              0.3988 &  {\bfseries 0.4098} &              0.3991 \\
                                    & 20\% tail &          0.2200 &              0.2550 &  {\bfseries 0.2625} &  {\bfseries 0.2625} \\
                                    & Worst class &          0.2140 &              0.2220 &              0.2240 &  {\bfseries 0.2590} \\
\bottomrule
\end{tabular}
}
\vspace{5pt}
\includegraphics[width=0.5\textwidth]{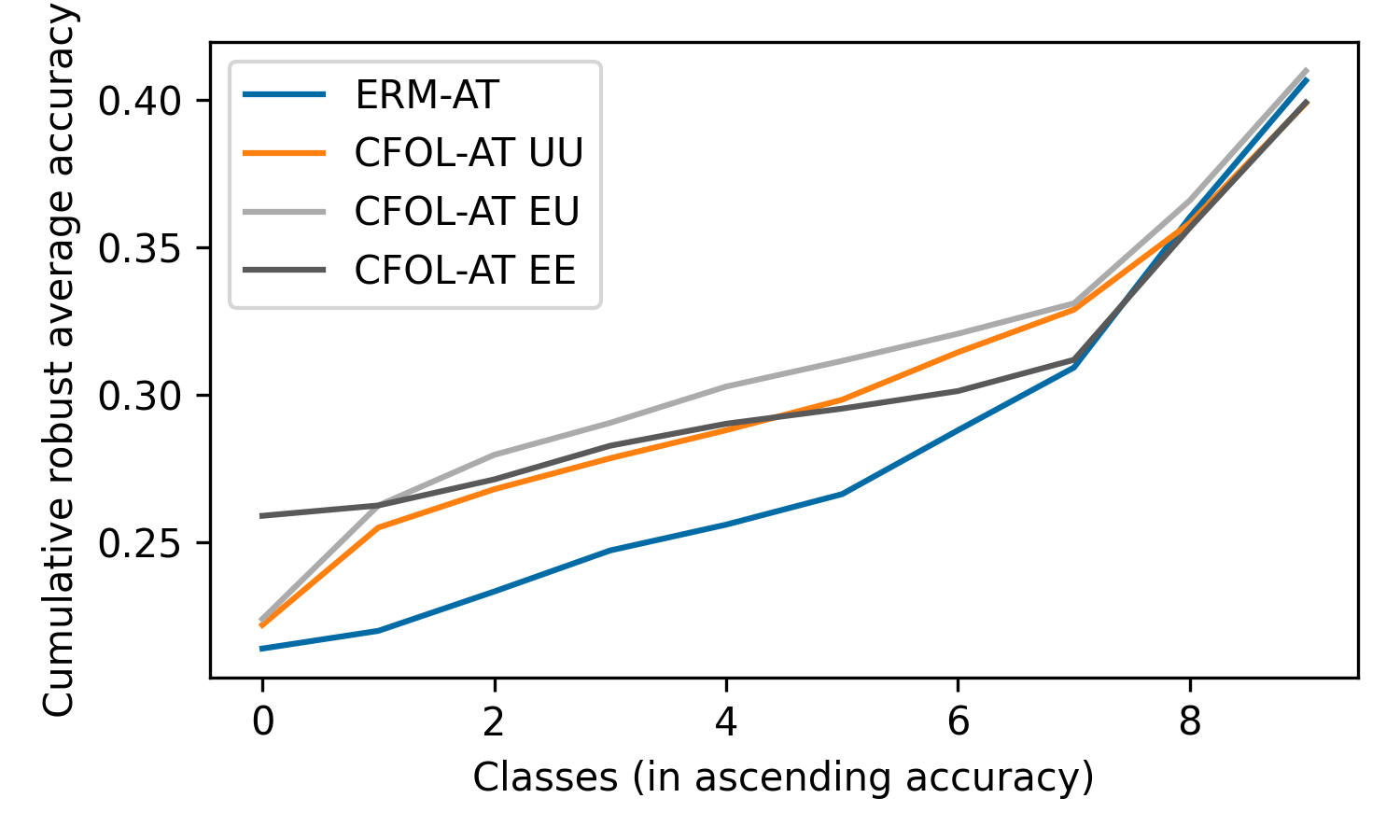}
\end{table}

\begin{figure}[H]
\includegraphics[width=0.5\textwidth]{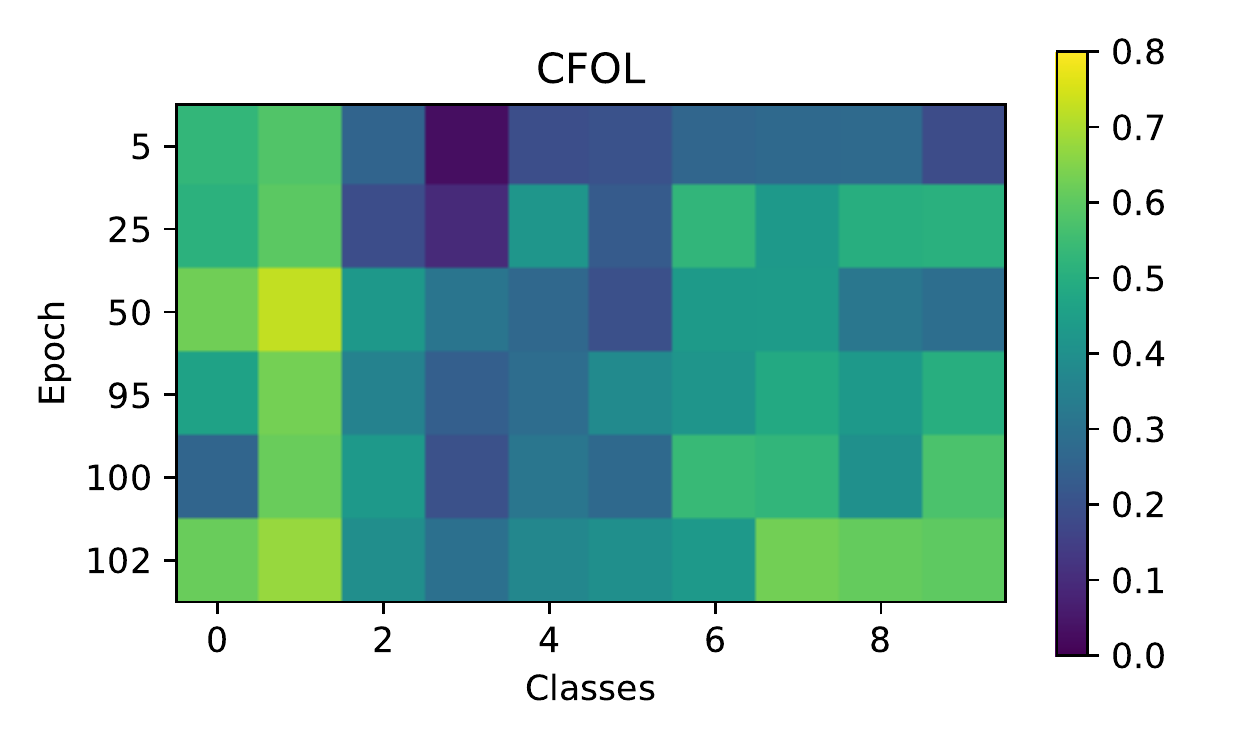}%
\includegraphics[width=0.5\textwidth]{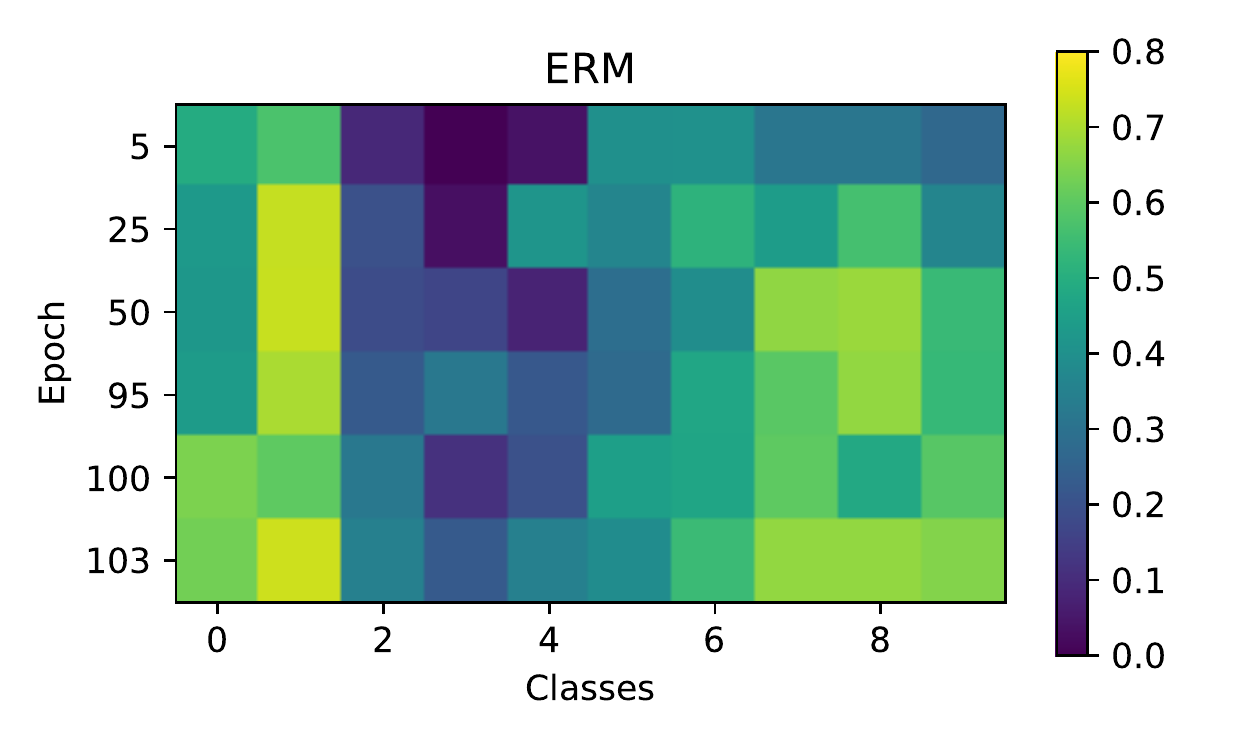}

\includegraphics[width=0.5\textwidth]{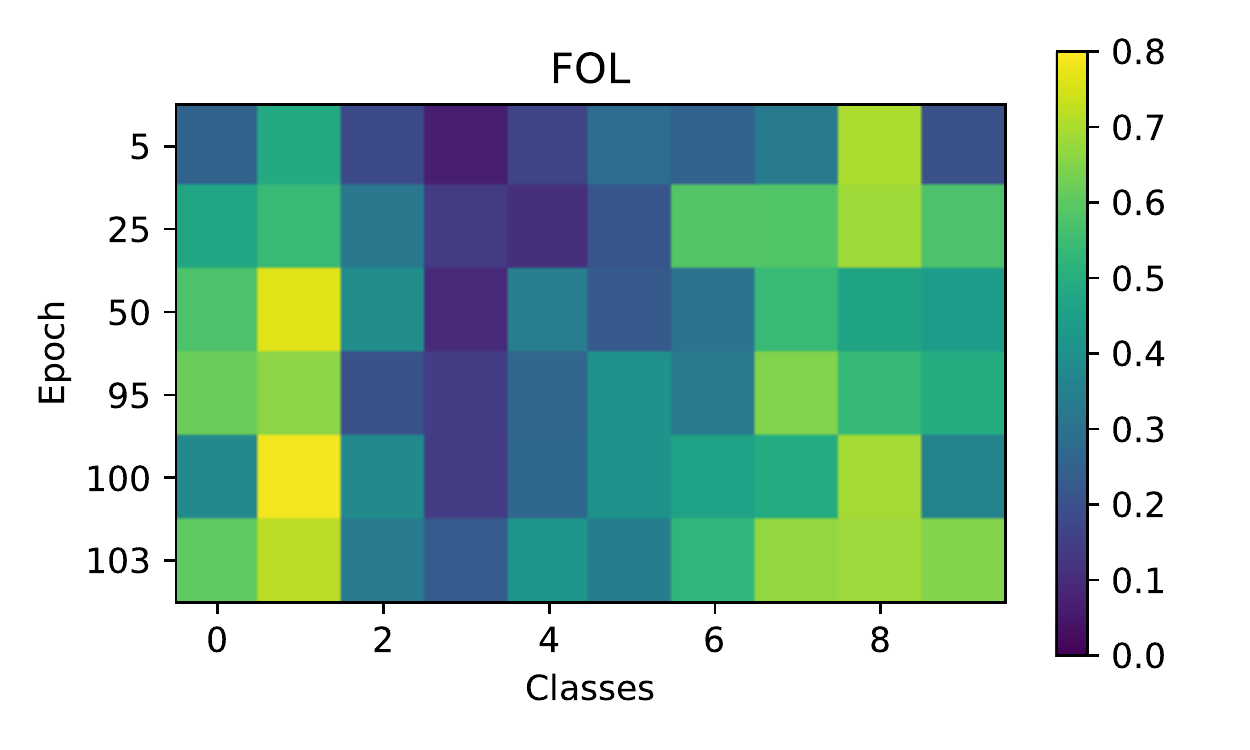}%
\includegraphics[width=0.5\textwidth]{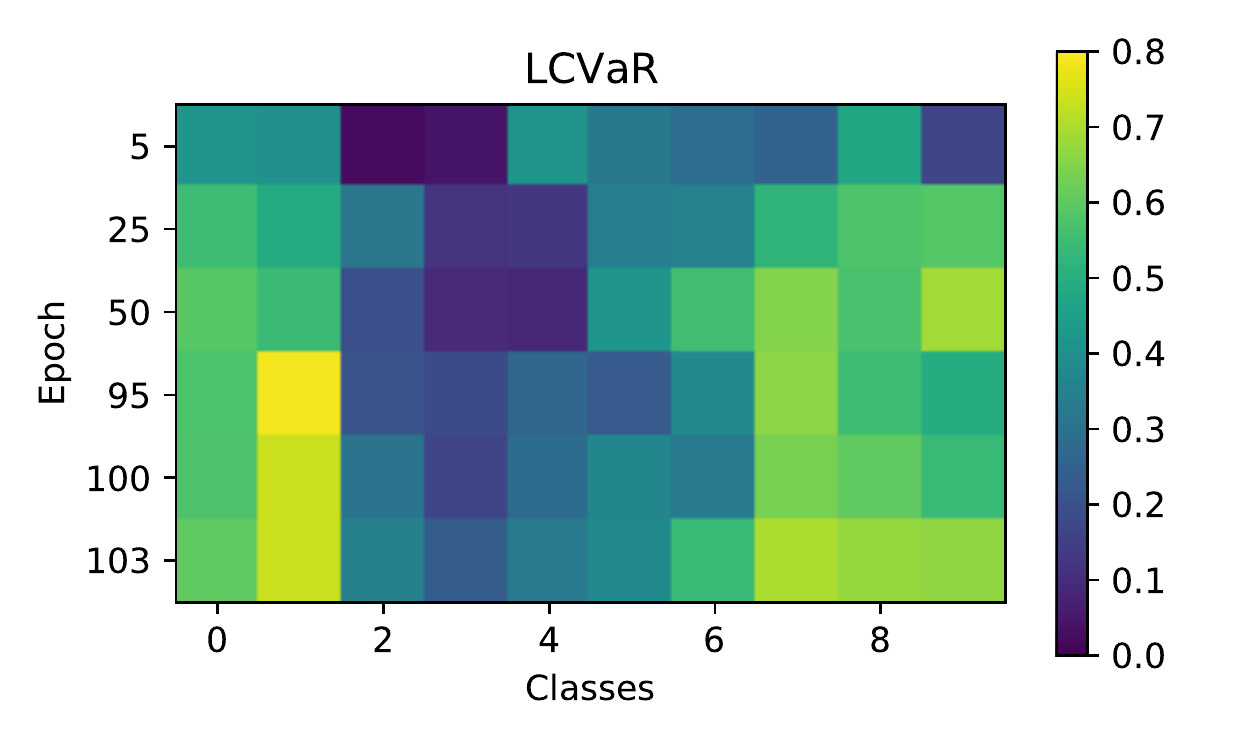}
\caption{{The evolution of the robust validation accuracies across epochs. The final epoch is when the model was early stopped based on the average robust accuracy. There is a clear pattern in what classes are harder (class 2-5). Both CFOL and FOL have a curious non-monotonic evolution for the class conditional accuracy. Notice for instance that both methods drop in accuracy for the 0th class at the stepsize change at epoch 100.}}
\label{fig:history:accuracy}
\end{figure}

\begin{table}[H]
\centering
\caption{\rbl{Imagenette classification using ResNet-18.}}
\label{tab:imagenette}
\rbl{\begin{tabular}{llll}
\toprule
                                    &             &     \textbf{ERM-AT} &    \textbf{CFOL-AT} \\
\midrule
\multirow{3}{*}{$\mathrm{{acc}}_{{\mathrm{{clean}}}}$} & Average &              0.8638 &  {\bfseries 0.8650} \\
                                    & 20\% tail &              0.7687 &  {\bfseries 0.7890} \\
                                    & Worst class &              0.7150 &  {\bfseries 0.7709} \\
\cline{1-4}
\multirow{3}{*}{$\mathrm{{acc}}_{{\mathrm{{rob}}}}$} & Average &  {\bfseries 0.6285} &              0.6181 \\
                                    & 20\% tail &              0.4026 &  {\bfseries 0.4534} \\
                                    & Worst class &              0.2850 &  {\bfseries 0.3912} \\
\bottomrule
\end{tabular}
}
\end{table}

\subsection{Hyperparameters}\label{app:hyperparams}

In this section we provide additional details to the hyperparameters specified in \Cref{sec:experiments}.

For CFOL-AT we use the same parameters as for ERM-AT described in \Cref{sec:experiments}. 
We set $\gamma=\nicefrac{1}{2}$ and the adversarial step-size $\eta=2\cdot 10^{-6}$ across all experiments, if not otherwise noted.
\rbl{One exception is Imagenette were number of iterations are roughly 5 times fewer due to the smaller dataset.
Thus, $\eta$ is picked $2.5$ times larger as suggested by theory through $\eta = \tilde{\mathcal O}(1/\sqrt{T})$ in \Cref{thm:convergence}.
CFOL-AT seems to be reasonable robust to step-size choice as seen in \Cref{tab:hyperparm}.} %
Similarly for FOL-AT we use the same parameters as for ERM-AT and set $\eta=1\cdot 10^{-7}$ after optimizing based on the worst class robust accuracy on CIFAR10.

The adversarial step-size picked for both FOL-AT and CFOL-AT is smaller in practice than theory suggests.
This suggests that the mistake bound in \Cref{asm:mistake-bound} is not satisfied for any sequence.
Instead we rely on the sampling process to be only mildly adversarial initially as implicitly enforced by the small adversarial step-size.
It is an interesting future direction to incorporate this implicit tempering directly into the mixing parameter $\gamma$ instead.

For LCVaR-AT we optimized over the size of the uncertainty set by adjusting the parameter $\alpha$ on CIFAR10.
This leads to the choice of $\alpha=0.8$ which we use across all subsequent datasets.
The hyperparameter exploration can be found in \Cref{tab:hyperparm}.

\begin{table}[t]
\centering
\caption{Hyperparameter exploration for CFOL-AT, LCVaR-AT and FOL-AT on CIFAR10.}
\label{tab:hyperparm}
\begin{tabular}{llllrrrr}
\toprule
       & {} & \multicolumn{2}{c}{Parameters} & \multicolumn{2}{c}{$\mathrm{{acc}}_{{\mathrm{{rob}}}}$} & \multicolumn{2}{c}{$\mathrm{{acc}}_{{\mathrm{{clean}}}}$} \\
       & {} &              $\eta$ & $\alpha$ &                             Average & Worst class &                               Average & Worst class \\
\midrule
\multirow{4}{*}{CFOL-AT} & \multirow{13}{*}{{}} &  $1 \times 10^{-6}$ &        - &                              0.5076 &      0.2700 &                                0.8372 &      0.7110 \\
       & \multirow{12}{*}{{}} &  $2 \times 10^{-6}$ &        - &                              0.5014 &      0.3200 &                                0.8308 &      0.6830 \\
       & \multirow{11}{*}{{}} &  $5 \times 10^{-6}$ &        - &                              0.4963 &      0.3000 &                                0.8342 &      0.7390 \\
       & \multirow{10}{*}{{}} &  $1 \times 10^{-5}$ &        - &                              0.4927 &      0.2850 &                                0.8314 &      0.7160 \\
\cline{1-8}
\multirow{5}{*}{LCVaR-AT} & \multirow{9}{*}{{}} &                   - &      0.1 &                              0.4878 &      0.1590 &                                0.7909 &      0.5120 \\
       & \multirow{8}{*}{{}} &                   - &      0.2 &                              0.5116 &      0.1250 &                                0.8274 &      0.4710 \\
       & \multirow{7}{*}{{}} &                   - &      0.5 &                              0.5104 &      0.1890 &                                0.8177 &      0.5550 \\
       & \multirow{6}{*}{{}} &                   - &      0.8 &                              0.5169 &      0.2440 &                                0.8259 &      0.6340 \\
       & \multirow{5}{*}{{}} &                   - &      0.9 &                              0.5249 &      0.1760 &                                0.8233 &      0.5170 \\
\cline{1-8}
\multirow{4}{*}{FOL-AT} & \multirow{4}{*}{{}} &  $1 \times 10^{-7}$ &        - &                              0.5106 &      0.2370 &                                0.8280 &      0.6210 \\
       & \multirow{3}{*}{{}} &  $5 \times 10^{-7}$ &        - &                              0.5038 &      0.2180 &                                0.8372 &      0.6100 \\
       & \multirow{2}{*}{{}} &  $1 \times 10^{-6}$ &        - &                              0.4885 &      0.2190 &                                0.8237 &      0.5880 \\
       & {} &  $5 \times 10^{-6}$ &        - &                              0.4283 &      0.1560 &                                0.8043 &      0.5380 \\
\bottomrule
\end{tabular}

\end{table}

\subsection{Experimental setup}\label{app:setup}

In this section we provide additional details for the experimental setup specified in \Cref{sec:experiments}.
We use one GPU on an internal cluster.
The experiments are conducted on the following five datasets:
\begin{description}
\item[CIFAR10] includes $\num{50000}$ training examples of $32 \times 32$ dimensional images and $10$ classes.
\item[CIFAR100] includes $\num{50000}$ training examples of $32 \times 32$ dimensional images and $100$ classes.
\item[STL10] includes $\num{5000}$ training examples of $96 \times 96$ dimensional images and $10$ classes.
\item[TinyImageNet] $\num{100000}$ training examples of $64 \times 64$ dimensional images and $200$ classes.
\rbl{\item[Imagenette]\footnote{\url{https://github.com/fastai/imagenette}} $\num{9469}$ training examples of $160 \times 160$ dimensional images and $10$ classes.}
\end{description}
For all experiment we use data augmentation in the form of random cropping, random horizontal flip, color jitter and 2 degrees random rotations.
Prior to cropping a black padding is added with the exception of Imagenette.
\rbl{For Imagenette we follow the standard for ImageNet evaluation and center crop the validation and test images.}

\paragraph{Early stopping}
As noted in \Cref{sec:experiments} we use the \emph{average} robust accuracy to early stop the model.
In contrast with common practice though, we use a validation set instead of the test set to avoid overfitting to the test set.
The class accuracies across the remaining two datasets, CIFAR100 and STL10, can be found in \Cref{fig:class-acc-remaining-datasets}.
We also include results when the model is early stopped based on the worst class accuracy on the validation set in \Cref{tab:earlystop-worst-class}.

\paragraph{Attack parameters}
\rebuttal{
A radius of $\nicefrac{8}{255}$ is used for the $\ell_\infty$-constraint attack unless otherwise noted.
For training we use 7 steps of PGD and a step-size of $\nicefrac{2}{255}$.
At test time we use 20 steps of PGD with a step-size of $2.5 \times \frac{8/255}{20}$.
For $\nicefrac{12}{255}$-bounded attacks in \Cref{tab:ballsize} we scale the training step-size and test step-size proportionally.
}

\paragraph{Larger models}
For the larger model experiments in \Cref{tab:existing-models} we train a Wide ResNet-34-10 using CFOL-AT (with $\gamma=0.8$).
Similarly to the pretrained weights from the literature, we early stop based on the test-set.
We compare against TRADES \citep{zhang2019theoretically} (Wide ResNet-34-10) and the training setup of \citet{madry2017towards}, which we throughout have denoted as ERM-AT, using shared weights from \citet{engstrom2018evaluating} (ResNet-50).

\paragraph{Mean \& standard deviation}
In \Cref{tab:variance} we apply CFOL-AT and ERM-AT to CIFAR10 for multiple different random seeds and verify that the observations made in \Cref{sec:experiments} remains unchanged.

\subsection{Implementation}\label{app:impl}

\rebuttal{We provide pytorch pseudo code for how CFOL can be integrated into existing training setups in Listing~\ref{lst:pseudocode}.
FOL is similar in structure, but additionally requires associating a unique index with each training example.
}
This allows the sampling method to re-weight each example individually.

For LCVaR we use the implementation of \citep{xu2020class}, which uses the dual CVaR formulation described in \Cref{app:cvar}.
More specifically, LCVaR uses the variant which finds a closed form solution for $\lambda$, since this was observed to be both faster and more stable in their work.

\lstinputlisting[language=Python, caption={Pseudo code for CFOL.}, label={lst:pseudocode}]{Supp/pseudocode.py}

\end{document}